\title{Generalization of Equilibrium Propagation to Vector Field Dynamics
}
\author{Benjamin Scellier$^1$,  Anirudh  Goyal$^1$,  Jonathan Binas$^1$, Thomas Mesnard$^2$, Yoshua Bengio$^{1\dagger}$\\
$^1$ Mila, Universit\'e de Montr\'eal\\ 
$^2$ \'Ecole Normale Sup\'erieure de Paris \\
 $^\dagger$CIFAR Senior Fellow\\
}
\newcommand{\norm}[1]{\left\lVert #1\right \rVert}   
\newcommand \x{\mathrm x}
\newcommand \y{\mathrm y}
\newcommand \s{\mathrm s}
\newtheorem{prop}{Proposition}
\newtheorem{thm}[prop]{Theorem}
\newtheorem{lem}[prop]{Lemma}
\newtheorem{cor}[prop]{Corollary}
\begin{document}

\maketitle

%%%%%%%%%%%%%%%%%%%%%%%%%%%%%%%%%%%%%%%%%%%%%%%%%%
%%%%%%%%%%%%%%%%%%%%%%%%%%%%%%%%%%%%%%%%%%%%%%%%%%
%%%%%%%%%%%%%%%%%%%%%%%%%%%%%%%%%%%%%%%%%%%%%%%%%%

\abstract{
The biological plausibility of the backpropagation algorithm has long been doubted by neuroscientists. Two major reasons are that neurons would need to send two different types of signal in the forward and backward phases, and that pairs of neurons would need to communicate through symmetric bidirectional connections.
We present a simple two-phase learning procedure for fixed point recurrent networks that addresses both these issues.
In our model, neurons perform leaky integration and synaptic weights are updated through a local mechanism.
Our learning method generalizes Equilibrium Propagation to vector field dynamics, relaxing the requirement of an energy function.
As a consequence of this generalization, the algorithm does not compute the true gradient of the objective function,
but rather approximates it at a precision which is proven to be directly related to the degree of symmetry of the feedforward and feedback weights.
We show experimentally that our algorithm optimizes the objective function.
}

%%%%%%%%%%%%%%%%%%%%%%%%%%%%%%%%%%%%%%%%%%%%%%%%%%
%%%%%%%%%%%%%%%%%%%%%%%%%%%%%%%%%%%%%%%%%%%%%%%%%%
%%%%%%%%%%%%%%%%%%%%%%%%%%%%%%%%%%%%%%%%%%%%%%%%%%

\section{Introduction}

Deep learning \citep{lecun2015deep} is the de-facto standard in areas such as
computer vision \citep{krizhevsky2012imagenet}, speech recognition \citep{hinton2012deep} and machine translation \citep{bahdanau2014neural}.
These applications deal with different types of data and have little in common at first glance.
Remarkably, all these models typically rely on the same basic principle: optimization of objective functions using the {\em backpropagation} algorithm.
Hence the question: does the cortex in the brain implement a mechanism similar to backpropagation, which optimizes objective functions?

The backpropagation algorithm used to train neural networks requires a side network for the propagation of error derivatives, which is vastly seen as biologically implausible \citep{crick-nature1989}. %FIXME: why? mention at least one reason!
One hypothesis, first formulated by \citet{hinton1988learning}, is that error signals in biological networks could be encoded in the temporal derivatives of the neural activity and propagated through the network via the neuronal dynamics itself, without the need for a side network. Neural computation would correspond to both inference and error back-propagation.
The present work also explores this idea.

Equilibrium Propagation \citep{scellier2017equilibrium} requires the network dynamics to be derived from an energy function, enabling computation of an exact gradient of an objective function.
However, in terms of biological realism, the requirement of symmetric weights between neurons arising from the energy function (the Hopfield energy) is not desirable.
The work presented here is a generalization of Equilibrium Propagation to vector field dynamics, without the need for energy functions, gradient dynamics, or symmetric connections.

Our approach is the following.
\begin{enumerate}
	\item We start from standard models in neuroscience for the dynamics of the neuron's membrane voltage and for the synaptic plasticity (section \ref{sec:neuroscience}). In particular, unlike in the Hopfield model \citep{hopfield1984neurons}, we do not assume pairs of neurons to have symmetric connections.
	\item We then describe a supervised learning algorithm for fixed point recurrent neural networks, based on these models (sections \ref{sec:supervised-learning}-\ref{sec:equilibrium-propagation}) and with few extra assumptions. Our model assumes two phases: at prediction time (first phase), no synaptic changes occur, whereas a local update rule becomes effective when the targets are observed (second phase).
	\item Finally, we attempt to show that the proposed algorithm optimizes an objective function (section \ref{sec:theory}) -- a highly desirable property from the point of view of machine learning. We show this experimentally and we attempt to understand this theoretically, too.
\end{enumerate}
%We find experimentally that our algorithm optimizes the objective function, without the requirement of symmetric weights prescribed by the Hopfield model.
%More surprisingly, even if the weights of the network are initialized so that for each pair of neurons, the two directed connections between them have opposite sign. %FIXME: is this true?
%This is a hint that the assumption of symmetric weights prescribed by the Hopfield model is not required.

%%%%%%%%%%%%%%%%%%%%%%%%%%%%%%%%%%%%%%%%%%%%%%%%%%
%%%%%%%%%%%%%%%%%%%%%%%%%%%%%%%%%%%%%%%%%%%%%%%%%%
%%%%%%%%%%%%%%%%%%%%%%%%%%%%%%%%%%%%%%%%%%%%%%%%%%

\section{Neuronal Dynamics}
\label{sec:neuroscience}

\begin{figure*}[h!]
\begin{center}
\begin{subfigure}[t]{.4\textwidth}
	\centering
	\includegraphics[width=0.8\linewidth]{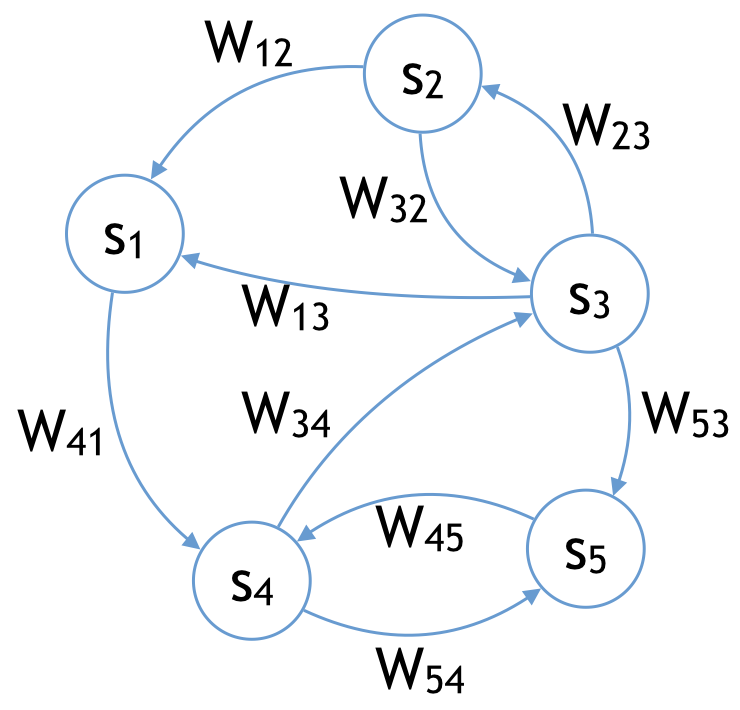}
	\caption{The network model studied here is best represented by a directed graph.}
	\label{fig:graph_directed}
\end{subfigure}
\hspace{2 em}
\begin{subfigure}[t]{.4\textwidth}
	\centering
	\includegraphics[width=0.8\linewidth]{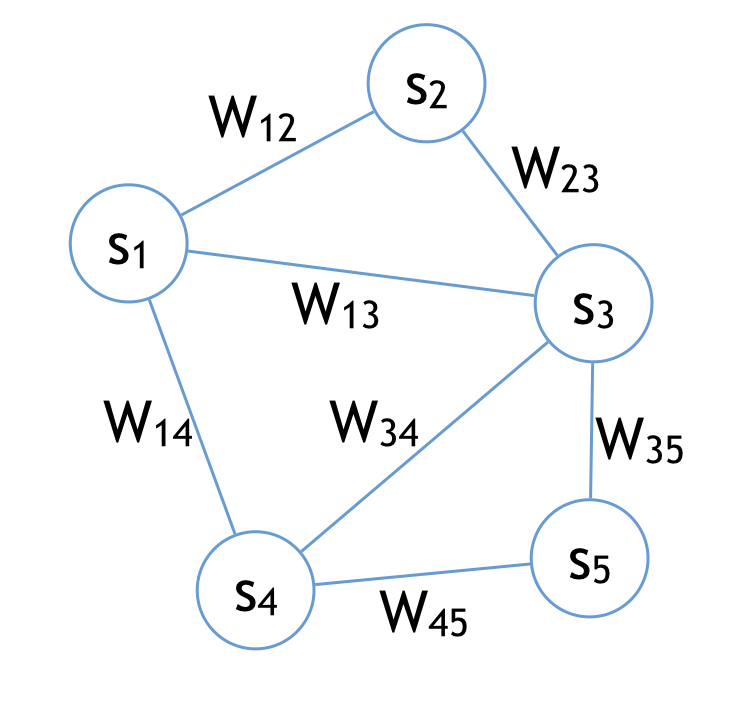}
	\caption{The Hopfield model is best represented by an undirected graph.}
	\label{fig:graph_undirected}
\end{subfigure}
\end{center}
	\caption{From the point of view of biological plausibility, the symmetry of connections in the Hopfield model is a major drawback (\ref{fig:graph_undirected}).
	The model that we study here is, like a biological neural network, a directed graph (\ref{fig:graph_directed}).}
	\label{fig:graph}
\end{figure*}

We denote by $s_i$ the membrane voltage of neuron $i$, which is continuous-valued and plays the role of a state variable for neuron $i$.
We suppose that $\rho$ is a deterministic function (nonlinear activation) that takes a scalar $s_i$ as input and outputs a scalar $\rho(s_i)$.
The scalar $\rho(s_i)$ represents the firing rate of neuron $i$.
The synaptic strength from neuron $j$ to neuron $i$ is denoted by $W_{ij}$.

%%%%%%%%%%%%%%%%%%%%%%%%%%%%%%%%%%%%%%%%%%%%%%%%%%
%%%%%%%%%%%%%%%%%%%%%%%%%%%%%%%%%%%%%%%%%%%%%%%%%%

\subsection{Neuron Model}

We consider the following time evolution for the membrane voltage $s_i$:
\begin{equation}
	\label{eq:leaky-integrator}
	\frac{ds_i}{dt} = \sum_j W_{ij} \rho(s_j) - s_i.
\end{equation}
Eq.~\ref{eq:leaky-integrator} is a standard point neuron model (see e.g. \citet{DayanAbbott2001})
in which neurons are seen as performing leaky temporal integration of their inputs.
We will refer to Eq.~\ref{eq:leaky-integrator} as the \textit{rate-based leaky integrator neuron model}.

Unlike energy-based models such as the Hopfield model \citep{hopfield1984neurons} that assume symmetric connections between neurons,
in the model studied here the connections between neurons are not tied.
Our model is represented by a directed graph,
whereas the Hopfield model is best regarded as an undirected graph (Figure \ref{fig:graph}).

%%%%%%%%%%%%%%%%%%%%%%%%%%%%%%%%%%%%%%%%%%%%%%%%%%
%%%%%%%%%%%%%%%%%%%%%%%%%%%%%%%%%%%%%%%%%%%%%%%%%%

\subsection{Plasticity Model}

We consider a simplified Hebbian update rule based on pre- and post-synaptic activity,
in which a change $ds_i$ in the post-synaptic activity causes a change $dW_{ij}$ in the synaptic strength given by
\begin{equation}
	\label{eq:stdp}
	dW_{ij} \propto \rho(s_j) ds_i.
\end{equation}

\citet{bengio2017stdp} have shown in simulations that this update rule can functionally reproduce Spike-Timing Dependent Plasticity (STDP).
STDP is considered a key mechanism of synaptic change in biological neurons \citep{Markram+Sakmann-1995,gerstner1996neuronal,markram2012spike}.
STDP is often conceived of as a spike-based process which relates the change in the synaptic weight $W_{ij}$
to the timing difference between postsynaptic spikes (in neuron $i$) and presynaptic spikes (in neuron $j$) ~\citep{bi2001synaptic}.
In fact, both experimental and computational work suggest that postsynaptic voltage, not postsynaptic spiking,
is more important for driving LTP (Long Term Potentiation) and LTD (Long Term Depression)~\citep{clopath2010voltage,lisman2010questions}.

Throughout this paper we will refer to Eq.~\ref{eq:stdp} as \textit{STDP-compatible weight change} and propose a machine learning justification for such an update rule.

%%%%%%%%%%%%%%%%%%%%%%%%%%%%%%%%%%%%%%%%%%%%%%%%%%
%%%%%%%%%%%%%%%%%%%%%%%%%%%%%%%%%%%%%%%%%%%%%%%%%%

\subsection{Vector Field $\mu$ in the State Space}

In this subsection we rewrite Eq.~\ref{eq:leaky-integrator} and Eq.~\ref{eq:stdp} at a higher level of abstraction.
Let $s=(s_1,s_2, \ldots)$ be the global \textit{state variable} and let $\theta = \left( W_{ij} \right)_{i,j}$ be the \textit{parameter variable} consisting of the set of synaptic weights.
We write $\mu_\theta(s)$ the vector whose components $\mu_{\theta,i}(s)$ are defined as
\begin{equation}
	\label{eq:vector-field-mu}
	\mu_{\theta,i}(s) := \sum_j W_{ij} \rho(s_j) - s_i.
\end{equation}
The vector $\mu_\theta(s)$ has the same dimension as the state variable $s$.
For fixed $\theta$, the mapping $s \mapsto \mu_\theta(s)$ is a vector field in the state space, which indicates in which direction each neuron's activity changes.
Eq.~\ref{eq:leaky-integrator} rewrites
\begin{equation}
	\label{eq:state-dynamics}
	\frac{ds}{dt} = \mu_\theta(s).
\end{equation}

Let us move on to the weight change of Eq.~\ref{eq:stdp}.
Since $\rho(s_j) = \frac{\partial \mu_{\theta,i}}{\partial W_{ij}}(s)$, the weight change can be expressed as $dW_{ij} \propto \frac{\partial \mu_{\theta,i}}{\partial W_{ij}}(s) ds_i$. Note that for all $i' \neq i$ we have $\frac{\partial \mu_{i'}}{\partial W_{ij}} = 0$ since to each synapse $W_{ij}$ corresponds a unique post-synaptic neuron $s_i$. Hence $dW_{ij} \propto \frac{\partial \mu_\theta}{\partial W_{ij}}(s) \cdot ds$.
We rewrite the STDP-compatible weight change (Eq.~\ref{eq:stdp}) in the concise form
\begin{equation}
	\label{eq:parameter-change}
	d\theta \propto \frac{\partial \mu_\theta}{\partial \theta}(s)^T  \cdot ds.
\end{equation}

%%%%%%%%%%%%%%%%%%%%%%%%%%%%%%%%%%%%%%%%%%%%%%%%%%
%%%%%%%%%%%%%%%%%%%%%%%%%%%%%%%%%%%%%%%%%%%%%%%%%%
%%%%%%%%%%%%%%%%%%%%%%%%%%%%%%%%%%%%%%%%%%%%%%%%%%

\section{Fixed Point Recurrent Neural Networks for Supervised Learning}
\label{sec:supervised-learning}

We consider the supervised setting in which we want to predict a \textit{target} $\y$ given an \textit{input} $\x$.
The units of the network are split in two sets: the `input' units $\x$ whose values are always clamped, and the dynamically evolving units $s$ (the neurons activity, indicating the state of the network), which themselves include the hidden layers ($s_1$ and $s_2$ here) and an output layer ($s_0$ here), as in Figure \ref{fig:network}.
In this context the vector field $\mu_\theta$ is defined by its components $\mu_{\theta,0}$, $\mu_{\theta,1}$ and $\mu_{\theta,2}$ on $s_0$, $s_1$ and $s_2$ respectively, as follows:
\begin{align}
	\label{eq:mu0}
	\mu_{\theta,0}(\x,s) & = W_{01} \cdot \rho(s_1) - s_0, \\
	\mu_{\theta,1}(\x,s) & = W_{12} \cdot \rho(s_2) + W_{10} \cdot \rho(s_0) - s_1, \\
	\label{eq:mu2}
	\mu_{\theta,2}(\x,s) & = W_{23} \cdot \rho(\x) + W_{21} \cdot \rho(s_1) - s_2.
\end{align}
In its original definition, $\rho$ takes a scalar as input and outputs a scalar,
but here we generalize the definition of $\rho$ to act on a vector (i.e. a layer of neurons),
in which case it returns a vector of the same dimension,
operating element-wise on the coordinates of the input vector.

The neurons $s$ follow the dynamics
\begin{equation}
	\label{eq:free-dynamics}
	\frac{ds}{dt} = \mu_\theta(\x,s).
\end{equation}
Unlike in the continuous Hopfield model, here the feedforward and feedback weights are not tied,
and in general the dynamics of Eq.~\ref{eq:free-dynamics} is not guaranteed to converge to a fixed point.
However, for simplicity of presentation we assume here that the dynamics of the neurons converge to a fixed point which we denote by $s_\theta^\x$.
The fixed point $s_\theta^\x$ implicitly depends on $\theta$ and $\x$ through the relationship
\begin{equation}
	\label{eq:free-fixed-point}
	\mu_\theta \left( \x, s_\theta^\x \right) = 0.
\end{equation}

In addition to the vector field $\mu_\theta(\x,s)$, a cost function $C(\y,s)$ measures how good or bad a state $s$ is with respect to the target $\y$.
In our model, the layer $s_0$ has the same dimension as the target $\y$ and plays the role of the `output' layer where the prediction is read.
The discrepancy between the output layer $s_0$ and the target $\y$ is measured by the quadratic cost function
\begin{equation}
	\label{eq:quadratic-cost}
	C(\y,s) = \frac{1}{2}\norm{\y-s_0}^2.
\end{equation}
The prediction is then read out on the output layer at the fixed point and compared to the target $\y$.
The objective function that we aim to minimize (with respect to $\theta$) is the cost at the fixed point $s_\theta^\x$, which we write
\footnote{More generally, in order to take into account cases when the dynamics of Eq.~\ref{eq:free-dynamics} does not converge to a fixed point, we can define the objective function as the average cost of the state along the trajectory (over infinite duration).}
\begin{equation}
	\label{eq:objective-function}
	J(\x,\y,\theta) := C \left( \y,s_\theta^\x \right).
\end{equation}
\citet{Almeida87} and \citet{pineda1987generalization} proposed an algorithm known as \textit{Recurrent Backpropagation}
\footnote{`Recurrent backprop' is a special case of `backprop through time', specialized to a fixed point recurrent network (i.e. an RNN whose dynamics converges to a fixed point, like those studied here).}
to optimize $J$ by computing the gradient $\frac{\partial J}{\partial \theta}(\x,\y,\theta)$.
This algorithm, presented in Appendix \ref{appendix:rec-backprop}, is based on extra assumptions on the neuronal dynamics which make it biologically implausible.

In this paper, our approach to optimize $J$ is to give up on computing the true gradient of $J$ and, instead,
we propose a simple algorithm based on the leaky integrator dynamics (Eq.~\ref{eq:state-dynamics})
and the STDP-compatible weight change (Eq.~\ref{eq:parameter-change}).
We will show in section \ref{sec:theory} that our algorithm computes a proxy to the gradient of $J$.

Also, note that in its general formulation, our algorithm applies to any vector field $\mu_\theta(\x,s)$ and cost function $C_\theta(\y,s)$,
even when $C$ depends on $\theta$ (Appendix \ref{appendix:theorem}).

%%%%%%%%%%%%%%%%%%%%%%%%%%%%%%%%%%%%%%%%%%%%%%%%%%
%%%%%%%%%%%%%%%%%%%%%%%%%%%%%%%%%%%%%%%%%%%%%%%%%%
%%%%%%%%%%%%%%%%%%%%%%%%%%%%%%%%%%%%%%%%%%%%%%%%%%

\section{Equilibrium Propagation in the Vector Field Setting}
\label{sec:equilibrium-propagation}

We describe a simple two-phase learning procedure based on the state dynamics (Eq.~\ref{eq:state-dynamics}) and the parameter change (Eq.~\ref{eq:parameter-change}).
This algorithm generalizes the one proposed in \citet{scellier2017equilibrium}. %Here we do not assume symmetric weights between neurons.

%%%%%%%%%%%%%%%%%%%%%%%%%%%%%%%%%%%%%%%%%%%%%%%%%%
%%%%%%%%%%%%%%%%%%%%%%%%%%%%%%%%%%%%%%%%%%%%%%%%%%

\subsection{Augmented Vector Field}

In its original (energy-based) version, the central idea of Equilibrium Propagation \citep{scellier2017equilibrium} is to see
the cost function $C$ (Eq.~\ref{eq:quadratic-cost}) as an `external potential energy' for the output layer $s_0$,
which can drive it towards the target $\y$.
Following the same idea we define the \textit{augmented vector field}
\begin{equation}
	\label{eq:augmented-vector-field}
	\mu_\theta^\beta(\x,\y,s) := \mu_\theta(\x,s) - \beta \frac{\partial C}{\partial s}(\y,s),
\end{equation}
where $\beta \geq 0$ is a real-valued scalar which we call the \textit{influence parameter} (or \textit{clamping factor}).
Rather than Eq.~\ref{eq:free-dynamics}, the dynamics of $s$ is more generally
\begin{equation}
	\label{eq:augmented-dynamics}
   \frac{ds}{dt} = \mu_\theta^\beta(\x,\y,s)
\end{equation}
for some value of $\beta$.
The parameter $\beta$ controls whether the output layer $s_0$ is pushed towards the target $\y$ or not, and by how much.
In particular, the dynamics of Eq.~\ref{eq:free-dynamics} corresponds to the case $\beta = 0$.

The augmented vector field can be seen as a sum of two `forces' that act on the temporal derivative of the state variable $s$.
Apart from the vector field $\mu_\theta$ that models the interactions between neurons within the network,
an `external force' $- \beta \frac{\partial C}{\partial s}$ is induced by the `external potential' $\beta C$ and acts on the output layer:
\begin{align}
	\label{eq:external-force}
	- \beta \frac{\partial C}{\partial s_0}(\y,s) & = \beta ( \y-s_0 ), \\
	- \beta \frac{\partial C}{\partial s_i}(\y,s) & = 0, \qquad \forall i \geq 1.
\end{align}
The form of Eq.~\ref{eq:external-force} suggests that
when $\beta=0$, the output layer $s_0$ is not sensitive to the target $\y$.
When $\beta > 0$, the `external force' drives the output layer $s_0$ towards the target $\y$.
The case $\beta \to \infty$ (not studied in this paper) would correspond to fully clamped output units.

Following the dynamics of Eq.~\ref{eq:augmented-dynamics}, the state variable $s$ eventually settles to a fixed point $s_\theta^\beta$ characterized by
\begin{equation}
	\mu_\theta^\beta \left( \x,\y,s_\theta^\beta \right) = 0.
\end{equation}
Note that the fixed point $s_\theta^\beta$ also depends on $\x$ and $\y$ but we omit to write the dependence to keep readable notations.
In particular for $\beta = 0$ we have $s_\theta^0 = s_\theta^\x$.

\begin{figure}[h!]
	\centering
	\captionsetup{width=.8\linewidth}
	\includegraphics[width=.6\linewidth]{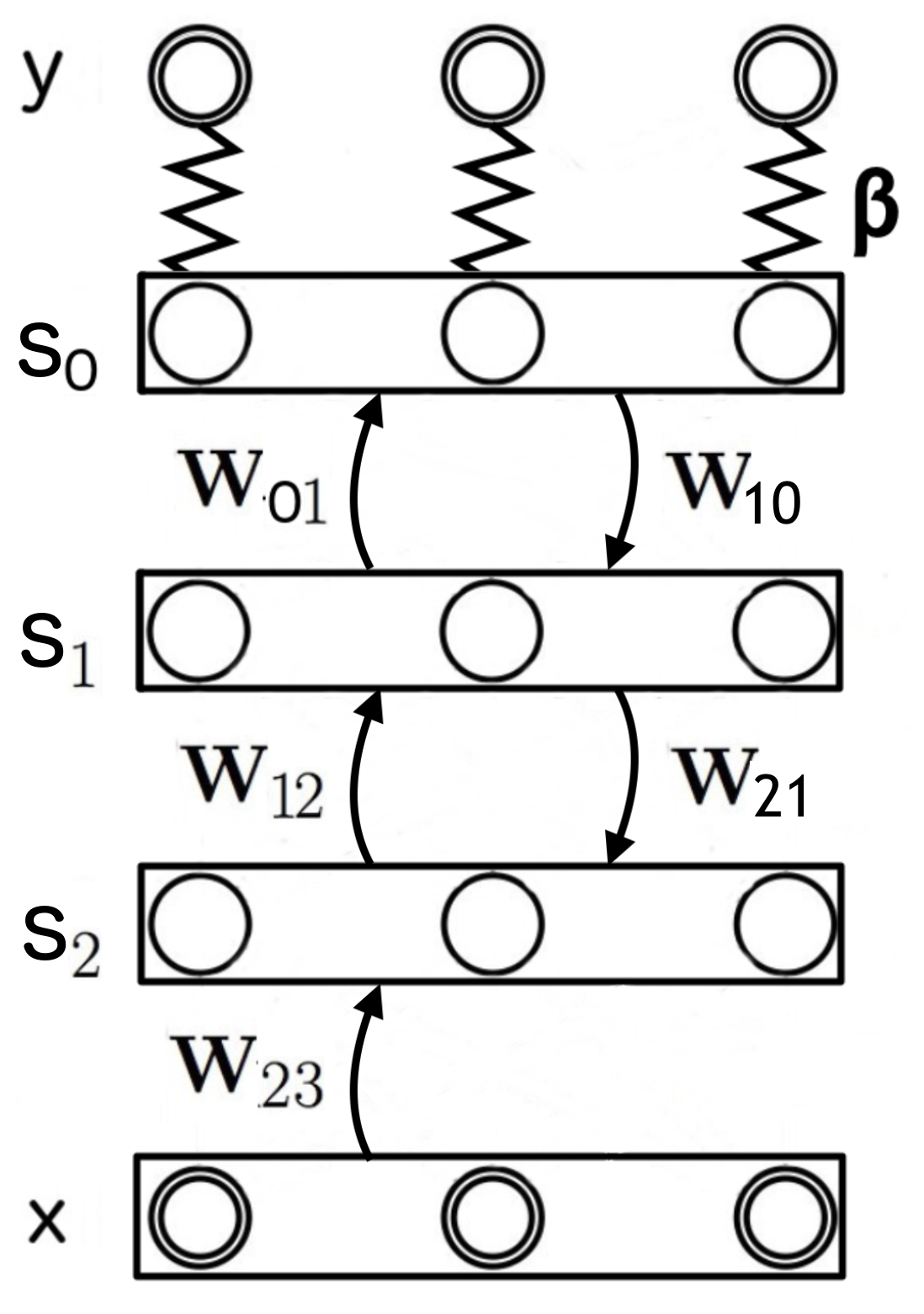}
	\caption{Graph of the network. Input $\x$ is clamped. Neurons $s$ include hidden layers $s_2$ and $s_1$,
	and output layer $s_0$ that corresponds to the layer where the prediction is read.
	Target $\y$ has the same dimension as $s_0$.
	The clamping factor $\beta$ scales the `external force' $-\beta \frac{\partial C}{\partial s}$ that attracts the output layer's state $s_0$ towards the target $\y$.}
	\label{fig:network}
\end{figure}

%%%%%%%%%%%%%%%%%%%%%%%%%%%%%%%%%%%%%%%%%%%%%%%%%%
%%%%%%%%%%%%%%%%%%%%%%%%%%%%%%%%%%%%%%%%%%%%%%%%%%

\subsection{Algorithm}
\label{sec:algorithm}

We propose the following two-phase learning procedure.
At prediction time (the first phase), the input units are set (clamped) to the input values $\x$, and the influence parameter $\beta$ is set to $0$.
The state variable $s$ (all the other neurons) follows the dynamics of Eq.~\ref{eq:augmented-dynamics} and settles to the first fixed point $s_\theta^0$.
During this phase, we assume that the synaptic weights are unchanged.

At training time (the second phase), the input units are still clamped and the influence parameter $\beta$ takes on a small positive value $\beta \gtrsim 0$. The state variable follows the dynamics of Eq.~\ref{eq:augmented-dynamics} for that new value of $\beta$, and the synaptic weights are assumed to follow the STDP-compatible weight change of Eq.~\ref{eq:parameter-change}.
The network eventually settles to a new nearby fixed point, denoted $s_\theta^\beta$, corresponding to the new value $\beta \gtrsim 0$.

%%%%%%%%%%%%%%%%%%%%%%%%%%%%%%%%%%%%%%%%%%%%%%%%%%
%%%%%%%%%%%%%%%%%%%%%%%%%%%%%%%%%%%%%%%%%%%%%%%%%%

\subsection{Backpropagation of Error Signals}

In the first phase, the influence parameter $\beta$ is equal to $0$.
The output units are `free', in the sense that they are not influenced by the target $\y$.
%% without any external constraints on his output neurons.

In the second phase, the influence parameter takes on a positive value $\beta \gtrsim 0$.
The novel `external force' $- \beta \frac{\partial C}{\partial s}(\y,s)$ in the dynamics of Eq.~\ref{eq:augmented-dynamics}
acts on the output units and drives them towards their targets (Eq.~\ref{eq:external-force}).
This force models the observation of the target $\y$:
it nudges the output units $s_0$ from their value
at the first fixed point in the direction of their targets.
Since this force only acts on the output layer $s_0$,
the other hidden layers ($s_i$ with $i>0$) are initially at equilibrium
at the beginning of the second phase.
The perturbation caused at the output layer ($s_0$) will then propagate backwards along the layers of the network ($s_1$ and $s_2$),
giving rise to `back-propagating' error signals.

A more detailed analysis of the second phase is carried out in Appendix \ref{appendix:rec-backprop} and a connection to the recurrent backpropagation algorithm \citep{Almeida87,pineda1987generalization} is established, following the ideas of \citet{scellier2017equivalence}.

%%%%%%%%%%%%%%%%%%%%%%%%%%%%%%%%%%%%%%%%%%%%%%%%%%
%%%%%%%%%%%%%%%%%%%%%%%%%%%%%%%%%%%%%%%%%%%%%%%%%%
%%%%%%%%%%%%%%%%%%%%%%%%%%%%%%%%%%%%%%%%%%%%%%%%%%

\section{Optimization of the Objective Function}
\label{sec:theory}

The proposed algorithm (section \ref{sec:algorithm}) is experimentally found to optimize the objective function $J(\theta)$ (see Appendix \ref{sec:implementation}).
In this section, we attempt to understand why.

%%%%%%%%%%%%%%%%%%%%%%%%%%%%%%%%%%%%%%%%%%%%%%%%%%
%%%%%%%%%%%%%%%%%%%%%%%%%%%%%%%%%%%%%%%%%%%%%%%%%%

\subsection{Vector Field $\nu$ in the Parameter Space}

Our model assumes that the STDP-compatible weight change occurs during the second phase of training,
when the network's state moves from the first fixed point $s_\theta^0$ to the second fixed point $s_\theta^\beta$.
Integrating Eq.~\ref{eq:parameter-change} from $s_\theta^0$ to $s_\theta^\beta$, normalizing it by a factor $\beta$ and letting $\beta \to 0$,
we get the update rule
\begin{equation}
	\Delta \theta \propto \nu(\x,\y,\theta),
\end{equation}
where $\nu(\x,\y,\theta)$ is the vector defined as
\footnote{Recall that $s_\theta^\beta$ depends on $\x$ and $\y$.
Hence $\nu$ depends on $\x$ and $\y$ through $\left. \frac{\partial s_\theta^\beta}{\partial \beta} \right|_{\beta=0}$.}
\begin{equation}
	\label{eq:vector-field-nu}
	\nu(\x,\y,\theta) := \frac{\partial \mu_\theta}{\partial \theta} \left( \x,s_\theta^0 \right)^T \cdot \left. \frac{\partial s_\theta^\beta}{\partial \beta} \right|_{\beta=0}.
\end{equation}
The vector $\nu(\x,\y,\theta)$ has the same dimension as $\theta$.
Thus, for fixed $\x$ and $\y$, the mapping $\theta \mapsto \nu(\x,\y,\theta)$ defines a vector field in the parameter space.
We show next that $\nu(\x,\y,\theta)$ is a proxy to the gradient $-\frac{\partial J}{\partial \theta}(\x,\y,\theta)$.

%%%%%%%%%%%%%%%%%%%%%%%%%%%%%%%%%%%%%%%%%%%%%%%%%%
%%%%%%%%%%%%%%%%%%%%%%%%%%%%%%%%%%%%%%%%%%%%%%%%%%

\subsection{The Vector Field $\nu$ As A Proxy For The Gradient}

\begin{thm}
	\label{thm}
	The gradient $\frac{\partial J}{\partial \theta}(\x,\y,\theta)$ and the vector field $\nu(\x,\y,\theta)$ can be expressed explicitly in terms of $\mu_\theta$ and $C$:
	\begin{align*}
		\frac{\partial J}{\partial \theta}(\x,\y,\theta) & =
		- \frac{\partial C}{\partial s} \left( \y,s_\theta^\x \right) \cdot
		\left( \frac{\partial \mu_\theta}{\partial s} \left( \x,s_\theta^\x \right) \right)^{-1} \cdot
		\frac{\partial \mu_\theta}{\partial \theta} \left( \x,s_\theta^\x \right), \\
		\nu(\x,\y,\theta) & =
		\frac{\partial C}{\partial s} \left( \y,s_\theta^\x \right) \cdot
		\left( \frac{\partial \mu_\theta}{\partial s} \left( \x,s_\theta^\x \right)^T \right)^{-1} \cdot
		\frac{\partial \mu_\theta}{\partial \theta} \left( \x,s_\theta^\x \right).
	\end{align*}
\end{thm}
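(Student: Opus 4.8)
The plan is to derive both identities by differentiating the two fixed-point equations and invoking the implicit function theorem; the only structural assumption I need is that the Jacobian $\frac{\partial \mu_\theta}{\partial s}(\x, s_\theta^\x)$ is invertible at the fixed point, which is exactly what makes $s_\theta^\x$ locally well-defined as a differentiable function of $\theta$ (and what lets the inverses in the statement exist).

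First I would handle the gradient. Starting from $J(\x,\y,\theta) = C(\y, s_\theta^\x)$ (Eq.~\ref{eq:objective-function}), the chain rule gives $\frac{\partial J}{\partial \theta} = \frac{\partial C}{\partial s}(\y, s_\theta^\x) \cdot \frac{\partial s_\theta^\x}{\partial \theta}$, so the task reduces to computing the sensitivity $\frac{\partial s_\theta^\x}{\partial \theta}$ of the fixed point to the parameters. I would obtain this by differentiating the defining relation $\mu_\theta(\x, s_\theta^\x) = 0$ (Eq.~\ref{eq:free-fixed-point}) \emph{totally} with respect to $\theta$, being careful that $\theta$ enters both explicitly and implicitly through $s_\theta^\x$. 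This yields $\frac{\partial \mu_\theta}{\partial \theta} + \frac{\partial \mu_\theta}{\partial s} \cdot \frac{\partial s_\theta^\x}{\partial \theta} = 0$ (all evaluated at $(\x, s_\theta^\x)$), and solving for the sensitivity and substituting back produces the first formula.

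Next I would treat $\nu$. Here the work is to evaluate $\left.\frac{\partial s_\theta^\beta}{\partial \beta}\right|_{\beta=0}$ and insert it into the definition (Eq.~\ref{eq:vector-field-nu}). Differentiating the augmented fixed-point condition $\mu_\theta(\x, s_\theta^\beta) - \beta \frac{\partial C}{\partial s}(\y, s_\theta^\beta) = 0$ with respect to $\beta$ produces three terms: $\frac{\partial \mu_\theta}{\partial s} \cdot \frac{\partial s_\theta^\beta}{\partial \beta}$, the direct term $-\frac{\partial C}{\partial s}$, and a term $-\beta \frac{\partial^2 C}{\partial s^2} \cdot \frac{\partial s_\theta^\beta}{\partial \beta}$ carrying a factor $\beta$. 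The crucial simplification is to evaluate at $\beta = 0$, where this last term drops out and $s_\theta^0 = s_\theta^\x$, leaving $\frac{\partial \mu_\theta}{\partial s}(\x, s_\theta^\x) \cdot \left.\frac{\partial s_\theta^\beta}{\partial \beta}\right|_{\beta=0} = \frac{\partial C}{\partial s}(\y, s_\theta^\x)$. Inverting the Jacobian and substituting into $\nu = \frac{\partial \mu_\theta}{\partial \theta}(\x, s_\theta^\x)^T \cdot \left.\frac{\partial s_\theta^\beta}{\partial \beta}\right|_{\beta=0}$ gives the second formula.

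The main thing to watch is not a genuine obstacle but careful bookkeeping of transposes and of total versus partial derivatives. The two expressions are identical except that the gradient carries $\left(\frac{\partial \mu_\theta}{\partial s}\right)^{-1}$ whereas $\nu$ carries $\left(\frac{\partial \mu_\theta}{\partial s}^T\right)^{-1}$; this single transpose is introduced precisely by the factor $\frac{\partial \mu_\theta}{\partial \theta}^T$ in the definition of $\nu$, and it is the source of the discrepancy between $\nu$ and $-\frac{\partial J}{\partial \theta}$. I would close by noting that the discrepancy vanishes exactly when $\frac{\partial \mu_\theta}{\partial s}$ is symmetric (the energy-based, symmetric-weight case), which is what ties this result to the paper's claim that $\nu$ is a proxy for the true gradient whose accuracy is governed by the asymmetry of the Jacobian.
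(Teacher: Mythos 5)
Your proposal is correct and takes essentially the same approach as the paper's own proof: implicit differentiation of the fixed-point equations (which the paper packages as a lemma giving $\frac{\partial s_\theta^\beta}{\partial \theta}$ and $\frac{\partial s_\theta^\beta}{\partial \beta}$ for the augmented field $\mu_\theta^\beta$ at general $\beta$), combined with the chain rule applied to $J$ and substitution into the definition of $\nu$. The only cosmetic differences are that you evaluate at $\beta = 0$ immediately instead of stating the lemma at general $\beta$, and the paper proves the slightly more general version in which $C$ also depends on $\theta$.
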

Theorem \ref{thm} is proved in Appendix \ref{appendix:theorem}.
The formulae show that $\nu(\x,\y,\theta)$ is related to $\frac{\partial J}{\partial \theta}(\x,\y,\theta)$ and that the angle between these two vectors is directly linked to the `degree of symmetry' of the Jacobian of $\mu_\theta$ at the fixed point $s_\theta^\x$.

%%%%%%%%%%%%%%%%%%%%%%%%%%%%%%%%%%%%%%%%%%%%%%%%%%
%%%%%%%%%%%%%%%%%%%%%%%%%%%%%%%%%%%%%%%%%%%%%%%%%%

\subsection{Energy Based Setting as an Idealization of the Vector Field Setting}

We say that $\theta$ is a `good parameter' if:
\begin{enumerate}
	\item for any initial state for the neurons, the state dynamics $\frac{ds}{dt} = \mu_\theta \left( \x,s \right)$ converges to a fixed point - a condition required for the algorithm to be correctly defined,
	\item the scalar product $\frac{\partial J}{\partial \theta}(\x,\y,\theta) \cdot \nu(\x,\y,\theta)$ at the point $\theta$ is negative - a desirable condition for the algorithm to optimize the objective function $J$.
\end{enumerate}

An important particular case is the energy-based setting studied in \citet{scellier2017equilibrium},
in which the vector field $\mu_\theta$ is a gradient field, i.e. $\mu_\theta(\x,s) = - \frac{\partial E_\theta}{\partial s}(\x,s)$ for some scalar function $E_\theta(\x,s)$.
In this case, the Jacobian of $\mu_\theta$ is symmetric since $\frac{\partial \mu_\theta}{\partial s} = - \frac{\partial^2 E_\theta}{\partial s^2} = \left( \frac{\partial \mu_\theta}{\partial s} \right)^T$, and by Theorem \ref{thm} we get $\nu(\x,\y,\theta) = - \frac{\partial J}{\partial \theta}(\x,\y,\theta)$.
Therefore, in this setting the set of `good parameters' is the entire parameter space --
for all $\theta$, the dynamics $\frac{ds}{dt}=-\frac{\partial E_\theta}{\partial s}(\x,s)$ converges to a fixed point (a minimum of $E_\theta(\x,s)$),
and $\nu(\x,\y,\theta) \cdot \frac{\partial J}{\partial \theta}(\x,\y,\theta) \leq 0$.

%In this sense, the energy-based setting can be seen as an \textit{idealization} of the more general vector field setting.

However, for the `set of good parameters' to cover a large proportion of the parameter space, it is not required that the vector field $\mu_\theta$ derives from an energy function $E_\theta$.
Indeed, experiments run on the MNIST dataset show that, when $\mu_\theta(\x,s)$ is defined as in Eq.~\ref{eq:mu0}-\ref{eq:mu2}, the objective function $J$ consistently decreases (Appendix \ref{sec:implementation}).
This means that, during training, as the parameter $\theta$ follows the update rule $\Delta \theta \propto \nu(\x,\y,\theta)$, all values of $\theta$ that the network takes are `good parameters'.

\section{Possible Implementation on Analog Hardware}
\label{sec:analog}

Our model is a continuous-time dynamical system (described by differential equations).
Digital computers are not well suited to implement such models because
they do intrinsically discrete-time computations, not continuous-time ones.
The basic way to simulate a differential equation on a digital computer is the Euler method in which time is discretized.
However the discretized dynamics is only an approximation of the true continuous-time dynamics.
The accuracy depends on the size of the discretization step.
The bigger the step size, the less acurate the simulation.
The smaller the step size, the slower the computations.

By contrast, analog hardware is ideal for implementing continuous-time dynamics such as those of leaky integrator neurons.
Previous work have proposed such implementations \citep{hertz1997nonlinear}.

%%%%%%%%%%%%%%%%%%%%%%%%%%%%%%%%%%%%%%%%%%%%%%%%%%
%%%%%%%%%%%%%%%%%%%%%%%%%%%%%%%%%%%%%%%%%%%%%%%%%%
%%%%%%%%%%%%%%%%%%%%%%%%%%%%%%%%%%%%%%%%%%%%%%%%%%

\section{Related Work}
\label{related-work}

Other alternatives to recurrent back-propagation in the framework of fixed point recurrent networks were proposed
by \citet{o1996biologically} and \citet{hertz1997nonlinear}.
Their algorithms are called `Generalized Recirculation algorithm' (or `GeneRec' for short) and `Non-Linear Back-propagation', respectively.
%Their first phase is the same as our first phase and they consider the same quadratic cost function.
%The difference with our algorithm is that during iteration of the backward activations they keep the forward activations fixed.
%They call `delta-rule learning' what we call here `STDP-compatible weight change'.

More recently, \citet{mesnard2016towards} have adapted Equilibrium Propagation to spiking networks, bringing the model closer to real neural networks.
\citet{zenke2017superspike} also proposed a backprop-like algorithm for supervised learning in spiking networks called `SuperSpike'.
%and \citet{gilra2017predicting} introduced `Feedback-based Online Local Learning Of Weights' (FOLLOW).
\citet{guerguiev2017towards} proposed a mechanism for backpropagating error signals in a multilayer network with compartment neurons, experimentally shown to learn useful representations.

%%%%%%%%%%%%%%%%%%%%%%%%%%%%%%%%%%%%%%%%%%%%%%%%%%
%%%%%%%%%%%%%%%%%%%%%%%%%%%%%%%%%%%%%%%%%%%%%%%%%%
%%%%%%%%%%%%%%%%%%%%%%%%%%%%%%%%%%%%%%%%%%%%%%%%%%

\section{Conclusion}

Among others, two key features of the backpropagation algorithm make it biologically implausible -- the two different kinds of signals sent in the forward and backward phases and the `weight transport problem'.
In this work, we have proposed a backprop-like algorithm for fixed-point recurrent networks, which addresses both these issues.
As a key contribution, in contrast to energy-based approaches such as the Hopfield model, we do not impose any symmetry constraints on the neural connections.
Our algorithm assumes two phases, the difference between them being whether synaptic changes occur or not.
Although this assumption begs for an explanation,
neurophysiological findings suggest that phase-dependent mechanisms are involved in learning and memory consolidation in biological systems.
%Theta waves, for instance, generate neural oscillatory patterns that can modulate the learning rule or the computation carried out by the network \citep{orr-et-al-Hippocampus2001}.
%Furthermore,
Synaptic plasticity, and neural dynamics in general, are known to be modulated by inhibitory neurons and dopamine release, depending on the presence or absence of a target signal \citep{fremaux2016neuromodulated,pawlak2010timing}.

In its general formulation (Appendix \ref{appendix:theorem}), the work presented in this paper is a generalization of Equilibrium Propagation \citep{scellier2017equilibrium} to vector field dynamics. This is achieved by relaxing the requirement of an energy function.
This generalization comes at the cost of not being able to compute the (true) gradient of the objective function but, rather a direction in the parameter space which is related to it.
Thereby, precision of the approximation of the gradient is directly related to the degree of symmetry of the Jacobian of the vector field.

Our work shows that optimization of an objective function can be achieved without ever computing the (true) gradient.
More thorough theoretical analysis needs to be carried out to understand and characterize the dynamics in the parameter space that optimize objective functions.
Naturally, the set of all such dynamics is much larger than the tiny subset of gradient-based dynamics.

Our framework provides a means of implementing learning in a variety of physical substrates, whose precise dynamics might not even be known exactly, but which simply have to be in the set of supported dynamics. In particular, this applies to analog electronic circuits, potentially leading to faster, more efficient, and more compact implementations.

\iffalse
%% %This framework can be used to model a broad class of dynamics without having to deal with intrinsic restrictions coming from an energy function.
%% 
%% Our algorithm also offers perspectives of implementation on specific hardware devices that rely on differential equations, such as analog hardware.
%% This could speed up in a significant manner training time and energy consumption.
%% 
%% Our work also opens questions already raised by the work of \citet{Lillicrap-et-al-nature2016,Vincent-JMLR-2010-small} -
%% how to characterize dynamics in the parameter space that optimize a given objective function?
%% Our work shows indeed that optimization of an objective function can happen without ever computing the (true) gradient of the objective.
%% 
%% 
%% Also, note that our model is rate-based, whereas biological networks emit spikes.
%% A stochastic framework has been proposed in the case of gradient dynamics \citep{Scellier+Bengio-frontiers2017}.
%% More exploration is needed to explain the role of spikes in biological networks.
%% Finally, to get even closer to biology one should go from rate-based neurons to spiking neurons.
%% However, adding spikes in a meaningful manner would first require to better understand what role do the spikes play in the brain and what advantage do they bring to the computation.
%% 
%% Our algorithm, still  suffers from the practical issue of a lengthy relaxation to the first fixed point.
%% 
%% One theoretical issue is about cases when the dynamics of the state variable does not converge to a fixed point.
%% Can we find a more general formulation for the case when the state does not converge to a fixed point, but instead cycles and converges to a periodic orbit?
%% 
%% One possibility to address both problems is to consider a different objective function:
%% instead of considering the cost at the fixed point,
%% we could look at the cost of the state that the network reaches after it has evolved for a finite duration $T$.
\fi

%%%%%%%%%%%%%%%%%%%%%%%%%%%%%%%%%%%%%%%%%%%%%%%%%%
%%%%%%%%%%%%%%%%%%%%%%%%%%%%%%%%%%%%%%%%%%%%%%%%%%
%%%%%%%%%%%%%%%%%%%%%%%%%%%%%%%%%%%%%%%%%%%%%%%%%%

\section*{Acknowledgments}

The authors would like to thank Blake Richards and Alexandre Thiery for feedback and discussions, as well as NSERC, CIFAR, Samsung, SNSF,
and Canada Research Chairs for funding, and Compute Canada for computing resources.

%%%%%%%%%%%%%%%%%%%%%%%%%%%%%%%%%%%%%%%%%%%%%%%%%%
%%%%%%%%%%%%%%%%%%%%%%%%%%%%%%%%%%%%%%%%%%%%%%%%%%
%%%%%%%%%%%%%%%%%%%%%%%%%%%%%%%%%%%%%%%%%%%%%%%%%%

\bibliographystyle{abbrvnat}
\bibliography{biblio} %strings,ml,aigaion

%%%%%%%%%%%%%%%%%%%%%%%%%%%%%%%%%%%%%%%%%%%%%%%%%%
%%%%%%%%%%%%%%%%%%%%%%%%%%%%%%%%%%%%%%%%%%%%%%%%%%
%%%%%%%%%%%%%%%%%%%%%%%%%%%%%%%%%%%%%%%%%%%%%%%%%%

\appendix
\part*{Appendix}

%%%%%%%%%%%%%%%%%%%%%%%%%%%%%%%%%%%%%%%%%%%%%%%%%%
%%%%%%%%%%%%%%%%%%%%%%%%%%%%%%%%%%%%%%%%%%%%%%%%%%
%%%%%%%%%%%%%%%%%%%%%%%%%%%%%%%%%%%%%%%%%%%%%%%%%%

\section{Theorem \ref{thm} - General Formulation and Proof}
\label{appendix:theorem}

In this appendix, before proving Theorem \ref{thm},
we first generalize the setting of sections \ref{sec:supervised-learning} and \ref{sec:equilibrium-propagation} to the case where the cost function $C$ also depends on the parameter $\theta$.
This is the case e.g. if the cost function includes a regularization term such as $\frac{1}{2} \lambda \norm{\theta}^2$.

%%%%%%%%%%%%%%%%%%%%%%%%%%%%%%%%%%%%%%%%%%%%%%%%%%
%%%%%%%%%%%%%%%%%%%%%%%%%%%%%%%%%%%%%%%%%%%%%%%%%%

\subsection{General Formulation}

Recall that we consider the supervised setting in which we want to predict a \textit{target} $\y$ given an \textit{input} $\x$.
The model is specified by a \textit{state variable} $s$, a \textit{parameter variable} $\theta$,
a \textit{vector field} in the state space $\mu_\theta(\x,s)$ and a \textit{cost function} $C_\theta(s,\y)$.
\footnote{In the setting described in section \ref{sec:supervised-learning}, the cost function $C(s,\y)$ did not depend on $\theta$.}
The stable \textit{fixed point} $s_\theta^\x$ corresponding to the 'prediction' from the model is characterized by
\begin{equation}
	\mu_\theta \left( \x,s_\theta^\x \right) = 0.
\end{equation}
The objective function to be minimized is the cost at the fixed point, i.e.
\begin{equation}
	\label{eq:objective-function-general}
	J(\x,\y,\theta) := C_\theta \left( \y,s_\theta^\x \right).
\end{equation}
Traditional methods to compute the gradient of $J$ such as Recurrent Backpropagation are thought to be biologically implausible (see Appendix \ref{appendix:rec-backprop}).
Our approach is to give up on computing the gradient of $J$ and let the parameter variable $\theta$ follow a vector field $\nu$ in the parameter space which approximates the gradient of $J$.

To this end we first define the \textit{augmented vector field}
\begin{equation}
	\mu_\theta^\beta(\x,\y,s) := \mu_\theta(\x,s) - \beta \; \frac{\partial C_\theta}{\partial s}(\y,s).
\end{equation}
Here $\beta$ is a real-valued scalar which we call \textit{influence parameter}.
The corresponding fixed point $s_\theta^\beta$ is a state at which the augmented vector field vanishes, i.e.
\begin{equation}
	\mu_\theta^\beta \left( \x,\y,s_\theta^\beta \right) = 0.
\end{equation}
Under mild regularity conditions on $\mu_\theta$ and $C_\theta$, the implicit function theorem ensures that, for a fixed data sample $(\x,\y)$,
the function $(\theta,\beta) \mapsto s_\theta^\beta$ is differentiable.

Then for every $\theta$ we define the vector $\nu(\x,\y,\theta)$ in the parameter space as
\begin{equation}
	\label{eq:vector-field-nu-general}
	\nu(\x,\y,\theta) := - \frac{\partial C_\theta}{\partial \theta} \left( \y,s_\theta^\x \right)
	+ \frac{\partial \mu_\theta}{\partial \theta} \left( \x,s_\theta^\x \right)^T
	\cdot \left. \frac{\partial s_\theta^\beta}{\beta} \right|_{\beta=0}.
\end{equation}
In section \ref{sec:equilibrium-propagation} we showed how the second term on the right-hand side of Eq.~\ref{eq:vector-field-nu-general} can be estimated with a two-phase training procedure.
In the general case where the cost function also depends on the parameter $\theta$,
the definition of the vector $\nu(\x,\y,\theta)$ contains the new term
$- \frac{\partial C_\theta}{\partial \theta} \left( \y,s_\theta^\x \right)$.
This extra term can also be measured in a biologically realistic way at the first fixed point $s_\theta^0$ at the end of the first phase.
For example if $C_\theta(\y,s)$ includes a regularization term such as $\frac{1}{2} \lambda \norm{\theta}^2$,
then $\nu(\x,\y,\theta)$ will include a backmoving force $- \lambda \theta$, modeling a form of synaptic depression.

%The cost $C$ could also contain a term of reconstruction error autoencoder.
%It is possible to force the feedforward and feedback weights to 'align'
%by adding a reconstruction error term to the cost function $C$.
%This has two advantages: the first one being to force the alignment of the weights, the second one being to accelerate the free phase,
%as proposed by~\citet{Bengio-fastinference-arXiv2016}.

We can now reformulate Theorem \ref{thm} in a slightly more general form, when the cost function depends on $\theta$.
The gradient of the objective function and the vector field $\nu$ are equal to
\begin{align}
	\label{eq:thm-gradient}
	\frac{\partial J}{\partial \theta}(\theta) & = \frac{\partial C_\theta}{\partial \theta}
	- \frac{\partial C_\theta}{\partial s} \cdot \left( \frac{\partial \mu_\theta}{\partial s} \right)^{-1} \cdot \frac{\partial \mu_\theta}{\partial \theta}, \\
	\label{eq:thm-nu}
	\nu(\theta) & = - \frac{\partial C_\theta}{\partial \theta} + 
	\frac{\partial C_\theta}{\partial s}
	\cdot \left( \left( \frac{\partial \mu_\theta}{\partial s} \right)^T \right)^{-1}
	\cdot \frac{\partial \mu_\theta}{\partial \theta}.
\end{align}
All the factors on the right-hand sides of Eq.~\ref{eq:thm-gradient}-\ref{eq:thm-nu} are evaluated at the fixed point $s_\theta^0$.
We prove Eq.~\ref{eq:thm-gradient}-\ref{eq:thm-nu} in the next subsection.

%%%%%%%%%%%%%%%%%%%%%%%%%%%%%%%%%%%%%%%%%%%%%%%%%%
%%%%%%%%%%%%%%%%%%%%%%%%%%%%%%%%%%%%%%%%%%%%%%%%%%

\subsection{Proof}

In order to prove Eq.~\ref{eq:thm-gradient} and Eq.~\ref{eq:thm-nu} (i.e. Theorem \ref{thm} in a slightly more general form), we first state and prove a lemma.

\begin{lem}
	\label{lemma}
	Let $s \mapsto \mu_\theta^\beta(s)$ be a differentiable vector field, and $s_\theta^\beta$ a fixed point characterized by
	\begin{equation}
		\label{eqn:fx-pt-eq}
		\mu_\theta^\beta \left( s_\theta^\beta \right) = 0.
	\end{equation}
	Then the partial derivatives of the fixed point are given by
	\begin{equation}
		\label{eq:id-1}
	  \frac{\partial s_\theta^\beta}{\partial \theta}
	  = - \left( \frac{\partial \mu_\theta^\beta}{\partial s} \left( s_\theta^\beta \right) \right)^{-1} \cdot \frac{\partial \mu_\theta^\beta}{\partial \theta} \left( s_\theta^\beta \right)
	\end{equation}
	and
	\begin{equation}
		\label{eq:id-2}
	  \frac{\partial s_\theta^\beta}{\partial \beta}
	  = - \left( \frac{\partial \mu_\theta^\beta}{\partial s} \left( s_\theta^\beta \right) \right)^{-1} \cdot \frac{\partial \mu_\theta^\beta}{\partial \beta} \left( s_\theta^\beta \right).
	\end{equation}
\end{lem}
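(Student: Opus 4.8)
The plan is to treat the fixed-point condition (\ref{eqn:fx-pt-eq}) as an identity that holds for all $(\theta,\beta)$ in a neighborhood of the point of interest, and then to differentiate both sides. The differentiability of the map $(\theta,\beta)\mapsto s_\theta^\beta$ is already granted by the implicit function theorem under the stated regularity conditions, so I can freely apply the chain rule. The only real subtlety is the bookkeeping: the map $\mu_\theta^\beta$ depends on $\theta$ (and on $\beta$) in two distinct ways — explicitly through its subscript and superscript, and implicitly through its argument $s_\theta^\beta$, which itself varies with $\theta$ and $\beta$.

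First I would differentiate $\mu_\theta^\beta(s_\theta^\beta)=0$ with respect to $\theta$. By the chain rule, the total derivative splits into the explicit partial $\frac{\partial \mu_\theta^\beta}{\partial \theta}(s_\theta^\beta)$ plus the contribution flowing through the state, namely $\frac{\partial \mu_\theta^\beta}{\partial s}(s_\theta^\beta)\cdot\frac{\partial s_\theta^\beta}{\partial \theta}$. Setting this sum to zero and solving for $\frac{\partial s_\theta^\beta}{\partial \theta}$ — that is, left-multiplying by the inverse of the state-Jacobian $\frac{\partial \mu_\theta^\beta}{\partial s}(s_\theta^\beta)$ — yields (\ref{eq:id-1}). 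Repeating the identical argument but differentiating with respect to $\beta$ in place of $\theta$ gives (\ref{eq:id-2}) immediately; the two identities are the same computation with the symbol $\theta$ swapped for $\beta$, so there is nothing new to verify in the second case.

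The main obstacle — indeed the only nontrivial point — is the invertibility of the state-Jacobian $\frac{\partial \mu_\theta^\beta}{\partial s}(s_\theta^\beta)$ that appears in both formulae. This invertibility is precisely the hypothesis of the implicit function theorem already invoked to guarantee that $s_\theta^\beta$ depends differentiably on $(\theta,\beta)$; at a stable fixed point of the dynamics $\frac{ds}{dt}=\mu_\theta^\beta(s)$ the eigenvalues of the Jacobian have negative real parts, so in particular the matrix is nonsingular. I would therefore record this regularity assumption explicitly at the outset and treat the inversion as justified, after which the remaining steps reduce to the routine chain-rule manipulation sketched above.
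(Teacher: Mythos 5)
Your proof is correct and follows essentially the same route as the paper's: implicit differentiation of the fixed-point identity $\mu_\theta^\beta(s_\theta^\beta)=0$ with respect to $\theta$ and $\beta$, then solving the resulting linear equation by inverting the state-Jacobian. Your explicit remark that the invertibility of $\frac{\partial \mu_\theta^\beta}{\partial s}(s_\theta^\beta)$ is guaranteed at a stable fixed point (eigenvalues with negative real parts) is a welcome addition that the paper leaves implicit.
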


\begin{proof}[Proof of Lemma \ref{lemma}]
	First we differentiate the fixed point equation Eq.~\ref{eqn:fx-pt-eq} with respect to $\theta$:
	\begin{equation}
	  \label{eqn:d-dtheta}
	  \frac{d}{d\theta} \; (\ref{eqn:fx-pt-eq}) \; \Rightarrow \quad
	  \frac{\partial \mu_\theta^\beta}{\partial \theta} \left( s_\theta^\beta \right)
	  +
	  \frac{\partial \mu_\theta^\beta}{\partial s} \left( s_\theta^\beta \right) \cdot \frac{\partial s_\theta^\beta}{\partial \theta}
	  = 0.
	\end{equation}
	Rearranging the terms we get Eq.~\ref{eq:id-1}.
	Similarly we differentiate the fixed point equation Eq.~\ref{eqn:fx-pt-eq} with respect to $\beta$:
	\begin{equation}
	  \label{eqn:d-dbeta}
	  \frac{d}{d\beta} \; (\ref{eqn:fx-pt-eq}) \; \Rightarrow \quad
	  \frac{\partial \mu_\theta^\beta}{\partial \beta} \left( s_\theta^\beta \right)
	  + \frac{\partial \mu_\theta^\beta}{\partial s} \left( s_\theta^\beta \right) \cdot \frac{\partial s_\theta^\beta}{\partial \beta}
	  = 0.
	\end{equation}
	Rearranging the terms we get Eq.~\ref{eq:id-2}.
\end{proof}

Now we are ready to prove prove Eq.~\ref{eq:thm-gradient} and Eq.~\ref{eq:thm-nu}.

\begin{proof}[Proof of Theorem \ref{thm} in its general formulation]
	Let us compute the gradient of the objective function with respect to $\theta$.
	Using the chain rule of differentiation we get
	\begin{equation}
		\frac{\partial J}{\partial \theta}
		= \frac{\partial C_\theta}{\partial \theta} +
		\frac{\partial C_\theta}{\partial s} \cdot \frac{\partial s_\theta^0}{\partial \theta}.
	\end{equation}
	Hence Eq.~\ref{eq:thm-gradient} follows from Eq.~\ref{eq:id-1} evaluated at $\beta=0$.
	Similarly, the expression for the vector field $\nu$ (Eq.~\ref{eq:thm-nu}) follows
	from its definition (Eq.~\ref{eq:vector-field-nu-general}),
	the identity Eq.~\ref{eq:id-2} evaluated at $\beta=0$
	and, using Eq.~\ref{eq:augmented-vector-field}, the fact that $\frac{\partial \mu^\beta}{\partial \beta} = - \frac{\partial C}{\partial s}$.
\end{proof}

\section{Link to Recurrent Backpropagation}
\label{appendix:rec-backprop}

Earlier work have proposed various methods to compute the gradient of the objective function $J$ (Eq.~\ref{eq:objective-function-general}).
One of them is \textit{Recurrent Backpropagation}, an algorithm discovered independently by \citet{Almeida87} and \citet{pineda1987generalization}.
This algorithm assumes that neurons send a different kind of signals through a different computational path in the second phase,
which seems less biologically plausible than our algorithm.

Our approach is to give up on the idea of computing the \textit{true} gradient of the objective function.
Instead our algorithm relies only on the leaky integrator neuron dynamics (Eq.~\ref{eq:leaky-integrator}) and the STDP-compatible weight change (Eq.~\ref{eq:stdp})
and we have shown that it computes a proxy to the true gradient (Theorem \ref{thm}).

Up to now, we have described our algorithm in terms of fixed points only.
In this appendix we study the dynamics itself in the second phase when the state of the network moves from the free fixed point $s_\theta^0$ to the weakly clamped fixed point $s_\theta^\beta$.

The result established in this section is a straightforward generalization of the result proved in \citet{scellier2017equivalence}.

%%%%%%%%%%%%%%%%%%%%%%%%%%%%%%%%%%%%%%%%%%%%%%%%%%
%%%%%%%%%%%%%%%%%%%%%%%%%%%%%%%%%%%%%%%%%%%%%%%%%%

\subsection{Preliminary Notations}

We denote by $S_\theta^\x(\s,t)$ the state of the network at time $t \geq 0$
when it starts from an initial state $\s$ at time $t=0$ and follows the dynamics of Eq.~\ref{eq:free-dynamics}.
In the theory of dynamical systems, $S_\theta^\x(\s,t)$ is called the \textit{flow map}.
Note that as $t \to \infty$ the dynamics converges to the fixed point $S_\theta^\x(\s,t) \to s_\theta^\x$.

Next we define the \textit{projected cost function}
\begin{equation}
    \label{eq:projected-cost}
    L_\theta(\x,\y,\s,t) := C_\theta \left( \y,S_\theta^\x(\s,t) \right).
\end{equation}

This is the cost of the state projected a duration $t$ in the future, when the networks starts from $\s$ and follows the dynamics of Eq.~\ref{eq:free-dynamics}.
For fixed $\theta$, $\x$, $\y$ and $\s$, the process $\left( L_\theta(\x,\y,\s,t) \right)_{t \geq 0}$ represents the successive cost values taken by the state of the network along the dynamics when it starts from the initial state $\s$.
For $t=0$, the projected cost is simply the cost of the current state: $L_\theta(\x,\y,\s,0) = C_\theta \left( \y,\s \right)$.
As $t \to \infty$ the dynamics converges to the fixed point, i.e. $S_\theta^\x(\s,t) \to s_\theta^\x$, so the projected cost converges to the objective $L_\theta(\x,\y,\s,t) \to J(\x,\y,\theta)$.
Under mild regularity conditions on $\mu_\theta(\x,s)$ and $C_\theta(\y,s)$,
the gradient of the projected cost function converges to the gradient of the objective function as $t \to \infty$, i.e.
\begin{equation}
	\label{eq:dL-dtheta-dJ-dtheta}
	\frac{\partial L_\theta}{\partial \theta}(\x,\y,\s,t) \to \frac{\partial J}{\partial \theta}(\x,\y,\theta).
\end{equation}
Therefore, if we can compute $\frac{\partial L_\theta}{\partial \theta}(\x,\y,\s,t)$ for a particular value of $\s$ and for any $t \geq 0$,
the desired gradient $\frac{\partial J}{\partial \theta}(\x,\y,\theta)$ can be obtained by letting $t \to \infty$.
We show next that this is what the Recurrent Backpropagation algorithm does in the case where the initial state $\s$ is the fixed point $s_\theta^\x$.

%%%%%%%%%%%%%%%%%%%%%%%%%%%%%%%%%%%%%%%%%%%%%%%%%%
%%%%%%%%%%%%%%%%%%%%%%%%%%%%%%%%%%%%%%%%%%%%%%%%%%

\subsection{Recurrent Back-Propagation}
\label{sec:rec-backprop}

In order to compute the gradient of $J$ (Eq.~\ref{eq:objective-function}), the approach of \textit{Recurrent Backpropagation} \citep{Almeida87,pineda1987generalization}
is to compute $\frac{\partial L_\theta}{\partial \theta} \left( \x,\y,s_\theta^\x,t \right)$ for $t \geq 0$ iteratively.
We get the gradient in the limit $t \to \infty$ as a consequence of Eq.~\ref{eq:dL-dtheta-dJ-dtheta}, when the initial state $\s$ is the fixed point $s_\theta^\x$.
\footnote{The gradient $\frac{\partial L_\theta}{\partial \theta}\left( \x,\y,s_\theta^\x,t \right)$ represents the partial derivative of the function $L$ with respect to its first argument, evaluated at the fixed point $s_\theta^\x$.}

\begin{thm}[Recurrent Backpropagation]
    \label{thm:rec-backprop}
    Consider the process
    \begin{align}
        \overline{S}_t & := \frac{\partial L_\theta}{\partial s} \left( \x,\y,s_\theta^\x,t \right), \qquad t \geq 0, \\
        \overline{\Theta}_t & := \frac{\partial L_\theta}{\partial \theta} \left( \x,\y,s_\theta^\x,t \right), \qquad t \geq 0.
    \end{align}
    We call $(\overline{S}_t,\overline{\Theta}_t)$ the \textit{process of error derivatives}.
    It is the solution of the linear differential equation
    \begin{align}
        \label{eq:Cauchy-1}
        \overline{S}_0 & = \frac{\partial C_\theta}{\partial s} \left( \y,s_\theta^\x \right),      \\
        \label{eq:Cauchy-2}
        \overline{\Theta}_0 & = \frac{\partial C_\theta}{\partial \theta} \left( \y,s_\theta^\x \right), \\
        \label{eq:Cauchy-3}
        \frac{d}{dt} \overline{S}_t & = \frac{\partial \mu_\theta}{\partial s} \left( \x,s_\theta^\x \right)^T \cdot \overline{S}_t, \\
        \label{eq:Cauchy-4}
        \frac{d}{dt} \overline{\Theta}_t & = \frac{\partial \mu_\theta}{\partial \theta} \left( \x,s_\theta^\x \right)^T \cdot \overline{S}_t.
    \end{align}
    Moreover, as $t \to \infty$
    \begin{equation}
    	\overline{\Theta}_t \to \frac{\partial J}{\partial \theta}(\x,\y,\theta).
    \end{equation}
\end{thm}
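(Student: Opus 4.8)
The plan is to build the differential equations of Theorem~\ref{thm:rec-backprop} from two structural properties of the flow map $S_\theta^\x$: its semigroup property $S_\theta^\x(\s,t+\delta)=S_\theta^\x\!\left(S_\theta^\x(\s,\delta),t\right)$, and the invariance of the fixed point, $S_\theta^\x(s_\theta^\x,\delta)=s_\theta^\x$ for all $\delta\geq 0$ (a trajectory started at $s_\theta^\x$ never moves). The initial conditions are the easy part: since $S_\theta^\x(\s,0)=\s$, the definition Eq.~\ref{eq:projected-cost} reduces to $L_\theta(\x,\y,\s,0)=C_\theta(\y,\s)$, in which the state carries no implicit dependence on $\s$ or $\theta$; differentiating and only afterwards substituting $\s=s_\theta^\x$ gives Eq.~\ref{eq:Cauchy-1} and Eq.~\ref{eq:Cauchy-2}. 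The order of operations matters, and I would stress (as the footnote does) that the derivatives are taken with $\s$ held free, so that the $\theta$-dependence of $s_\theta^\x$ is never differentiated through.

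For the evolution of $\overline{S}_t$, I would transfer the semigroup property to the projected cost, $L_\theta(\s,t+\delta)=L_\theta\!\left(S_\theta^\x(\s,\delta),t\right)$ (suppressing $\x,\y$), and differentiate in $\s$ to get $\frac{\partial L_\theta}{\partial s}(\s,t+\delta)=\frac{\partial L_\theta}{\partial s}\!\left(S_\theta^\x(\s,\delta),t\right)\cdot\frac{\partial S_\theta^\x}{\partial s}(\s,\delta)$. Evaluating at $\s=s_\theta^\x$, the fixed-point invariance keeps the inner argument pinned at $s_\theta^\x$, so this becomes $\overline{S}_{t+\delta}=\overline{S}_t\cdot\frac{\partial S_\theta^\x}{\partial s}(s_\theta^\x,\delta)$. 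A first-order expansion $\frac{\partial S_\theta^\x}{\partial s}(s_\theta^\x,\delta)=I+\delta\,\frac{\partial \mu_\theta}{\partial s}(\x,s_\theta^\x)+o(\delta)$ --- obtained by differentiating the flow equation $\frac{d}{d\delta}S_\theta^\x=\mu_\theta(\x,S_\theta^\x)$ with respect to $\s$ and using $\frac{\partial S_\theta^\x}{\partial s}(\cdot,0)=I$ --- then gives Eq.~\ref{eq:Cauchy-3} in the limit $\delta\to 0$, up to the transpose convention under which $\overline{S}_t$ is treated as a column vector. The key point is that, because the Jacobian is frozen at the fixed-point value $\frac{\partial \mu_\theta}{\partial s}(\x,s_\theta^\x)$, the resulting linear ODE has constant coefficients.

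The equation for $\overline{\Theta}_t$ is where I expect the real work, because differentiating the same identity in $\theta$ is a genuine total derivative: $\theta$ enters $L_\theta$ explicitly through $C_\theta$ and $\mu_\theta$, and implicitly through the inner flow $S_\theta^\x(\s,\delta)$. The chain rule produces $\frac{\partial L_\theta}{\partial \theta}(\s,t+\delta)=\frac{\partial L_\theta}{\partial \theta}\!\left(S_\theta^\x(\s,\delta),t\right)+\frac{\partial L_\theta}{\partial s}\!\left(S_\theta^\x(\s,\delta),t\right)\cdot\frac{\partial S_\theta^\x}{\partial \theta}(\s,\delta)$, and again setting $\s=s_\theta^\x$ and using fixed-point invariance collapses it to $\overline{\Theta}_{t+\delta}=\overline{\Theta}_t+\overline{S}_t\cdot\frac{\partial S_\theta^\x}{\partial \theta}(s_\theta^\x,\delta)$. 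The needed expansion is $\frac{\partial S_\theta^\x}{\partial \theta}(s_\theta^\x,\delta)=\delta\,\frac{\partial \mu_\theta}{\partial \theta}(\x,s_\theta^\x)+o(\delta)$, which follows from differentiating the flow equation in $\theta$ and noting $\frac{\partial S_\theta^\x}{\partial \theta}(\cdot,0)=0$ because $S_\theta^\x(\s,0)=\s$ is $\theta$-independent; letting $\delta\to 0$ yields Eq.~\ref{eq:Cauchy-4}. The main obstacle is the bookkeeping of the several appearances of $\theta$ and verifying that fixed-point invariance is exactly what annihilates the cross term that would otherwise survive.

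Finally, the convergence $\overline{\Theta}_t\to\frac{\partial J}{\partial \theta}(\x,\y,\theta)$ is immediate: it is Eq.~\ref{eq:dL-dtheta-dJ-dtheta} specialized to the initial state $\s=s_\theta^\x$, holding under the same mild regularity assumptions already invoked (and consistent with the fact that, since $\frac{\partial \mu_\theta}{\partial s}(\x,s_\theta^\x)$ is a stable Jacobian, $\overline{S}_t\to 0$ and the increments driving $\overline{\Theta}_t$ are integrable).
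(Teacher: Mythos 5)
Your proof is correct, and it reaches Eq.~\ref{eq:Cauchy-1}--\ref{eq:Cauchy-4} by a route that genuinely differs from the paper's, although both rest on the same two ingredients: the semigroup identity $L_\theta(\s,t+u)=L_\theta\left(S_\theta^\x(\s,u),t\right)$ and the fixed-point condition. The paper differentiates this identity in time first, obtaining the backward Kolmogorov equation $\frac{\partial L}{\partial t}(\s,t)=\frac{\partial L}{\partial s}(\s,t)\cdot\mu(\s)$ (its Eq.~\ref{eq:proof1-1}), valid at \emph{every} state $\s$; it then differentiates that equation with respect to $s$ and $\theta$, and the unwanted second-derivative terms $\frac{\partial^2 L}{\partial s^2}\cdot\mu$ and $\frac{\partial^2 L}{\partial s\,\partial\theta}\cdot\mu$ are annihilated at the fixed point by $\mu\left(s_\theta^\x\right)=0$. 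You instead differentiate the semigroup identity in $\s$ and $\theta$ at finite inner duration $\delta$, use the invariance $S_\theta^\x\left(s_\theta^\x,\delta\right)=s_\theta^\x$ to pin the inner argument, obtaining the transfer recursions $\overline{S}_{t+\delta}=\overline{S}_t\cdot\frac{\partial S_\theta^\x}{\partial s}\left(s_\theta^\x,\delta\right)$ and $\overline{\Theta}_{t+\delta}=\overline{\Theta}_t+\overline{S}_t\cdot\frac{\partial S_\theta^\x}{\partial\theta}\left(s_\theta^\x,\delta\right)$, and only then let $\delta\to0$ via the first-order (variational) expansions of the linearized flow. The two mechanisms that kill the cross terms are equivalent facts ($\mu\left(s_\theta^\x\right)=0$ versus flow invariance of the fixed point), but your version never manipulates second derivatives of $L$, instead requiring smooth dependence of the flow on initial conditions and parameters; this makes the constant-coefficient structure of the linear ODEs self-evident, at the price of not producing the reusable intermediate Eq.~\ref{eq:proof1-1}. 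Your handling of the initial conditions, the transpose bookkeeping (row-vector gradients versus the column-vector form of Eq.~\ref{eq:Cauchy-3}--\ref{eq:Cauchy-4}), the insistence that $\s$ be held free until after differentiation, and the reduction of the limit $t\to\infty$ to Eq.~\ref{eq:dL-dtheta-dJ-dtheta} all match the paper.
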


Note that $\overline{S}_t$ takes values in the state space (space of the state variable $s$)
and $\overline{\Theta}_t$ takes values in the parameter space (space of the parameter variable $\theta$).

Theorem \ref{thm:rec-backprop} offers a way to compute the gradient $\frac{\partial J}{\partial \theta} \left( \x,\y,\theta \right)$.
In the first phase, the state variable $s$ follows the dynamics of Eq.~\ref{eq:free-dynamics} and relaxes to the fixed point $s_\theta^\x$.
Reaching this fixed point is necessary for the computation of the transpose of the Jacobian $\frac{\partial \mu_\theta}{\partial s} \left( \x,s_\theta^\x \right)^T$
which is required in the second phase.
In the second phase, the processes $\overline{S}_t$ and $\overline{\Theta}_t$ follow the dynamics determined by
Eq.~\ref{eq:Cauchy-1}-\ref{eq:Cauchy-4}.
As $t \to \infty$, we have that $\overline{\Theta}_t$
converges to the desired gradient $\frac{\partial J}{\partial \theta} \left( \x,\y,\theta \right)$.

%%%%%%%%%%%%%%%%%%%%%%%%%%%%%%%%%%%%%%%%%%%%%%%%%%
%%%%%%%%%%%%%%%%%%%%%%%%%%%%%%%%%%%%%%%%%%%%%%%%%%

\subsection{Temporal Derivatives of Neural Activities in Equilibrium Propagation Approximate Error Derivatives}

Two major objections against the biological plausibility of the recurrent backpropagation algorithm are that:
\begin{enumerate}
	\item it is not clear what the quantities $\overline{S}_t$ and $\overline{\Theta}_t$ would represent in a biological network, and
	\item it is not clear how dynamics such as those of Eq.~\ref{eq:Cauchy-1}-\ref{eq:Cauchy-4} for $\overline{S}_t$ and $\overline{\Theta}_t$ could emerge.
\end{enumerate}
By contrast, Equilibrium Propagation (section \ref{sec:equilibrium-propagation}) does not require specialized dynamics in the second phase.
Theorem \ref{thm} shows that the gradient $\frac{\partial J}{\partial \theta}(\x,\y,\theta)$ can be approximated by $\nu(\x,\y,\theta)$,
which can be itself estimated based on the first and second fixed points.
In this section we study the dynamics of the network in the second phase, from the first fixed point to the second fixed point.
Although Equilibrium Propagation does not compute explicit error derivatives, we are going to define a process $\left( \widetilde{S}_t,\widetilde{\Theta}_t \right)_{t \geq 0}$ as a function of the dynamics, and show that this process approximates the error derivatives $\left( \overline{S}_t,\overline{\Theta}_t \right)_{t \geq 0}$.

Let us denote by $S_\theta^\beta(\s,t)$ the state of the network at time $t \geq 0$
when it starts from an initial state $\s$ at time $t=0$ and follows the dynamics of Eq.~\ref{eq:augmented-dynamics}.
In particular, for $\beta=0$ we have $S_\theta^0(\s,t) = S_\theta^\x(\s,t)$.

The state of the network at the beginning of the second phase is the first fixed point $s_\theta^0$.
We choose as origin of time $t=0$ the moment when the second phase starts:
the network is in the state $s_\theta^0$ and the influence parameter takes on a small positive value $\beta \gtrsim 0$.
With our notations, the state of the network at time $t \geq 0$ in the second phase is $S_\theta^\beta \left( s_\theta^0,t \right)$.
At time $t=0$ the network is at the first fixed point, i.e. $S_\theta^\beta \left( s_\theta^0,0 \right) = s_\theta^0$, and as $t \to \infty$ the network's state converges to the second fixed point, i.e. $S_\theta^\beta(\s,t) \to s_\theta^\beta$.

Now let us define
\begin{align}
    \widetilde{S}_t      := & - \lim_{\beta \to 0} \frac{1}{\beta} \frac{\partial S_\theta^\beta}{\partial t} \left( s_\theta^0,t \right), \\
    \widetilde{\Theta}_t := & \frac{\partial C_\theta}{\partial \theta} \left( \y,s_\theta^0 \right) \nonumber \\
                            & - \lim_{\beta \to 0} \frac{1}{\beta} \frac{\partial \mu_\theta}{\partial \theta} \left( \x,s_\theta^0 \right)^T
    \cdot \left( S_\theta^\beta \left( s_\theta^0,t \right) - s_\theta^0 \right).
\end{align}
First of all note that $\widetilde{S}_t$ takes values in the state space and $\widetilde{\Theta}_t$ takes values in the parameter space.
From the point of view of biological plausibility,
unlike $\left( \overline{S}_t, \overline{\Theta}_t \right)$ in Recurrent Backpropagation, the process $\left( \widetilde{S}_t, \widetilde{\Theta}_t \right)$ in Equilibrium Propagation has a physiological interpretation.
Indeed $\widetilde{S}_t$ is simply the temporal derivative of the neural activity at time $t$ in the second phase (rescaled by a factor $\frac{1}{\beta}$).
As for $\widetilde{\Theta}_t$, the first term is zero in the case of the quadratic cost (Eq.~\ref{eq:quadratic-cost})
\footnote{If the cost function includes a regularization term of the form $\frac{1}{2} \norm{\theta}^2$, the first term is a backmoving force $-\lambda \theta$ modeling a form of synaptic depression.}
and the second term corresponds to the STDP-compatible weights change (Eq.~\ref{eq:parameter-change}) integrated between the initial state (the first fixed point) and the state at time $t$ in the second phase (and rescaled by a factor $\frac{1}{\beta}$).
For short, we call $(\widetilde{S}_t,\widetilde{\Theta}_t)$ the \textit{process of temporal derivatives}.

\begin{thm}
    \label{thm:temporal-derivatives}
    The process of temporal derivatives $(\widetilde{S}_t,\widetilde{\Theta}_t)$ satisfies
	\begin{align}
	    \label{eq:Cauchy-bis-1}
	    \widetilde{S}_0                   & = \frac{\partial C_\theta}{\partial s}        \left( \y,s_\theta^\x \right), \\
	    \label{eq:Cauchy-bis-2}
	    \widetilde{\Theta}_0              & = \frac{\partial C_\theta}{\partial \theta}   \left( \y,s_\theta^\x \right), \\
	    \label{eq:Cauchy-bis-3}
	    \frac{d}{dt} \widetilde{S}_t      & = \frac{\partial \mu_\theta}{\partial s}      \left( \x,s_\theta^\x \right)   \cdot \widetilde{S}_t, \\
	    \label{eq:Cauchy-bis-4}
	    \frac{d}{dt} \widetilde{\Theta}_t & = \frac{\partial \mu_\theta}{\partial \theta} \left( \x,s_\theta^\x \right)^T \cdot \widetilde{S}_t.
	\end{align}
	Furthermore, as $t \to \infty$
	\begin{equation}
		\widetilde{\Theta}_t \to \nu(\x,\y,\theta).
	\end{equation}
\end{thm}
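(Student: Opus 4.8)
The plan is to linearize the second-phase flow in the clamping factor $\beta$ about $\beta=0$ and to recognize $\widetilde{S}_t$ and $\widetilde{\Theta}_t$ as explicit functionals of the resulting first-order sensitivity. Write $u(\beta,t) := S_\theta^\beta\left( s_\theta^0,t \right)$ for the second-phase trajectory started at the free fixed point, and set $v(t) := \left. \frac{\partial u}{\partial \beta}(\beta,t) \right|_{\beta=0}$. Two boundary facts organize the whole argument: since $s_\theta^0$ is the free fixed point, $u(0,t)=s_\theta^0$ for all $t$, so $\frac{\partial u}{\partial t}(0,t)=0$; and since every trajectory starts there, $u(\beta,0)=s_\theta^0$ for all $\beta$, so $v(0)=0$. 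Smooth dependence of ODE flows on parameters (the implicit-function-theorem regularity already invoked for Lemma~\ref{lemma}) justifies differentiating in $\beta$ and hence all the limits $\beta\to 0$ below. Because $\frac{\partial u}{\partial t}(0,t)=0$, the rescaled limit defining $\widetilde{S}_t$ is a $\beta$-derivative at $0$, and swapping $\partial_\beta$ with $\partial_t$ gives $\widetilde{S}_t = -\dot v(t)$; likewise $\frac{1}{\beta}\left( u(\beta,t)-s_\theta^0 \right) \to v(t)$, so
\[
\widetilde{\Theta}_t = \frac{\partial C_\theta}{\partial \theta}\left( \y,s_\theta^0 \right) - \left( \frac{\partial \mu_\theta}{\partial \theta}\left( \x,s_\theta^0 \right) \right)^T \cdot v(t).
\]

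Next I would obtain the dynamics of the sensitivity by differentiating the flow identity $\frac{\partial u}{\partial t}(\beta,t) = \mu_\theta^\beta\left( \x,\y,u(\beta,t) \right)$ with respect to $\beta$ at $\beta=0$. Using $\mu_\theta^0=\mu_\theta$ together with $\frac{\partial \mu_\theta^\beta}{\partial \beta} = -\frac{\partial C_\theta}{\partial s}$ (Eq.~\ref{eq:augmented-vector-field}), the chain rule gives the linear, constant-coefficient equation
\[
\dot v(t) = \frac{\partial \mu_\theta}{\partial s}\left( \x,s_\theta^0 \right) \cdot v(t) - \frac{\partial C_\theta}{\partial s}\left( \y,s_\theta^0 \right), \qquad v(0)=0,
\]
where all Jacobians are frozen at the fixed point and are therefore constant in $t$.

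The four Cauchy relations then follow by routine bookkeeping, using $s_\theta^0=s_\theta^\x$. Evaluating the two functionals at $t=0$ with $v(0)=0$ yields Eq.~\ref{eq:Cauchy-bis-1}--\ref{eq:Cauchy-bis-2}. Differentiating them in $t$ and substituting the $v$-equation gives the dynamics: since the coefficient is constant, $\ddot v = \frac{\partial \mu_\theta}{\partial s}\,\dot v$, so $\frac{d}{dt}\widetilde{S}_t = -\ddot v = \frac{\partial \mu_\theta}{\partial s}\,\widetilde{S}_t$ (Eq.~\ref{eq:Cauchy-bis-3}), and $\frac{d}{dt}\widetilde{\Theta}_t = -\left( \frac{\partial \mu_\theta}{\partial \theta} \right)^T \dot v = \left( \frac{\partial \mu_\theta}{\partial \theta} \right)^T \widetilde{S}_t$ (Eq.~\ref{eq:Cauchy-bis-4}). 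It is worth stressing that $\widetilde{S}_t$ is driven by the Jacobian $\frac{\partial \mu_\theta}{\partial s}$ itself, not its transpose; this single difference from the recurrent-backpropagation system of Theorem~\ref{thm:rec-backprop} is exactly what turns the exact gradient into the proxy $\nu$.

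For the limit I would use that $s_\theta^\x$ is a stable fixed point, so $\frac{\partial \mu_\theta}{\partial s}\left( \x,s_\theta^\x \right)$ has spectrum in the open left half-plane. Then $\widetilde{S}_t = e^{\,t\,\partial \mu_\theta/\partial s}\,\widetilde{S}_0 \to 0$, the integral $\int_0^\infty \widetilde{S}_{t'}\,dt'$ converges to $-\left( \frac{\partial \mu_\theta}{\partial s} \right)^{-1}\widetilde{S}_0$ (using $\int_0^\infty e^{At}\,dt = -A^{-1}$ for $A$ with negative spectrum), and integrating Eq.~\ref{eq:Cauchy-bis-4} gives
\[
\widetilde{\Theta}_t \to \widetilde{\Theta}_0 + \left( \frac{\partial \mu_\theta}{\partial \theta} \right)^T \int_0^\infty \widetilde{S}_{t'}\, dt' = \frac{\partial C_\theta}{\partial \theta} - \left( \frac{\partial \mu_\theta}{\partial \theta} \right)^T \left( \frac{\partial \mu_\theta}{\partial s} \right)^{-1} \frac{\partial C_\theta}{\partial s}.
\]
Equivalently, $v(t) \to \left. \frac{\partial s_\theta^\beta}{\partial \beta} \right|_{\beta=0}$, which is precisely Eq.~\ref{eq:id-2} of Lemma~\ref{lemma} at $\beta=0$; substituting this into the expression for $\widetilde{\Theta}_t$ and comparing with the closed form of Theorem~\ref{thm} (Eq.~\ref{eq:thm-nu}) identifies the limit with $\nu(\x,\y,\theta)$. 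I expect the main obstacle to be analytic rather than algebraic: justifying the convergence of $\int_0^\infty \widetilde{S}_{t'}\,dt'$ — equivalently, the interchange of the long-time limit $t\to\infty$ with the $\beta$-differentiation — which rests entirely on the spectral-gap (stability) assumption on the free fixed point; everything else is differentiation and substitution.
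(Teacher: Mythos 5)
Your proposal takes essentially the same route as the paper's own proof: you introduce the sensitivity $v(t)=\left.\frac{\partial}{\partial\beta}S_\theta^\beta\left(s_\theta^0,t\right)\right|_{\beta=0}$, use the two boundary facts $S_\theta^0\left(s_\theta^0,t\right)=s_\theta^0$ and $v(0)=0$ to convert the $\frac{1}{\beta}$-rescaled limits into $\beta$-derivatives, differentiate the flow identity in $\beta$ to get $\dot v=\frac{\partial\mu_\theta}{\partial s}\cdot v-\frac{\partial C_\theta}{\partial s}$ with the Jacobians frozen at the fixed point, and then obtain Eq.~\ref{eq:Cauchy-bis-1}--\ref{eq:Cauchy-bis-4} by evaluating at $t=0$ and differentiating in time; this is exactly the structure of the paper's argument (its Eq.~\ref{eq:proof3-2}--\ref{eq:proof3-3} and what follows). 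Where you genuinely add something is the convergence claim: the paper's proof stops once the four Cauchy relations are established and never argues the limit $t\to\infty$ at all, whereas you supply it from stability of the free fixed point, via $\widetilde{S}_t=e^{tA}\,\widetilde{S}_0\to 0$ and $\int_0^\infty e^{tA}\,dt=-A^{-1}$ for $A=\frac{\partial\mu_\theta}{\partial s}\left(\x,s_\theta^\x\right)$, equivalently $v(t)\to\left.\frac{\partial s_\theta^\beta}{\partial\beta}\right|_{\beta=0}$ as given by Lemma~\ref{lemma}. Identifying that interchange of limits as the only analytic content, resting on the spectral assumption, is an improvement over the paper.

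One caveat concerns your very last step. The limit you correctly compute, $\frac{\partial C_\theta}{\partial\theta}-\left(\frac{\partial\mu_\theta}{\partial\theta}\right)^T\left(\frac{\partial\mu_\theta}{\partial s}\right)^{-1}\frac{\partial C_\theta}{\partial s}$, is the \emph{negative} of $\nu$ as defined in Eq.~\ref{eq:vector-field-nu-general} and Eq.~\ref{eq:thm-nu}, where $\nu=-\frac{\partial C_\theta}{\partial\theta}+\dots$; so the claimed identification ``$=\nu$'' does not hold under the paper's stated sign conventions. This is not a defect of your derivation but an inconsistency internal to the paper: with the paper's own definitions of $\widetilde{\Theta}_t$ and $\nu$, the Cauchy system of Theorem~\ref{thm:temporal-derivatives} forces $\widetilde{\Theta}_t\to-\nu$, which is in fact what makes the energy-based consistency check work (there $\widetilde{\Theta}_t=\overline{\Theta}_t\to\frac{\partial J}{\partial\theta}$, and indeed $\nu=-\frac{\partial J}{\partial\theta}$ when the Jacobian is symmetric). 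A clean write-up should either negate the definitions of $\widetilde{S}_t$ and $\widetilde{\Theta}_t$ (which flips the signs of the initial conditions Eq.~\ref{eq:Cauchy-bis-1}--\ref{eq:Cauchy-bis-2}) or state the conclusion as $\widetilde{\Theta}_t\to-\nu$. Since you, unlike the paper, evaluated the limit explicitly, you were in a position to catch this discrepancy; the proof would be stronger for saying so rather than silently matching the claimed formula.
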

Theorem \ref{thm:rec-backprop} and Theorem \ref{thm:temporal-derivatives} show that the processes $(\overline{S}_t,\overline{\Theta}_t)$ and $(\widetilde{S}_t,\widetilde{\Theta}_t)$ satisfy related differential equations.
The difference between the dynamics of Theorem \ref{thm:rec-backprop} and Theorem \ref{thm:temporal-derivatives} lies in Eq.~\ref{eq:Cauchy-3} and Eq.~\ref{eq:Cauchy-bis-3}.

As in Theorem \ref{thm}, the discrepancy between these processes is directly linked to the `degree of symmetry' of the Jacobian of $\mu_\theta$.
Again, an important particular case is the energy-based setting in which $\mu_\theta = - \frac{\partial E_\theta}{\partial s}$ for some scalar function $E_\theta(\x,s)$.
In this case $\frac{\partial \mu_\theta}{\partial s} = - \frac{\partial^2 E_\theta}{\partial s^2} = \left( \frac{\partial \mu_\theta}{\partial s} \right)^T$, and by Theorems \ref{thm:rec-backprop} and \ref{thm:temporal-derivatives} we get $\widetilde{S}_t = \overline{S}_t$ and $\widetilde{\Theta}_t = \overline{\Theta}_t$.
This result was stated and proved in \citet{scellier2017equivalence}.

%Note that as $t \to \infty$, $\widetilde{S}_t$ converges to $0$.
%This is a consequence of Eq.~\ref{eq:Cauchy-bis-3} and the fact that the Jacobian $\frac{\partial \mu_\theta}{\partial s} \left( s_\theta^0 \right)$ is positive definite (since $s_\theta^0$ is a stable fixed point), as already mentioned in section \ref{sec:rec-backprop}.
%Intuitively it converges to $0$ because $S_\theta^\beta \left( s_\theta^0,t \right)$ converges smoothly to the nudged fixed point $s_\theta^\beta$.

%%%%%%%%%%%%%%%%%%%%%%%%%%%%%%%%%%%%%%%%%%%%%%%%%%
%%%%%%%%%%%%%%%%%%%%%%%%%%%%%%%%%%%%%%%%%%%%%%%%%%

\subsection{Proofs of Theorems \ref{thm:rec-backprop} and \ref{thm:temporal-derivatives}}

\begin{proof}[Proof of Theorem \ref{thm:rec-backprop}]
    First of all, by definition of $L$ (Eq.~\ref{eq:projected-cost}) we have $L_\theta(\s,0) = C_\theta(\s)$.
    Therefore the initial conditions (Eq.~\ref{eq:Cauchy-1} and Eq.~\ref{eq:Cauchy-2}) are satisfied:
    \begin{equation}
        \frac{\partial L_\theta}{\partial s} \left( s_\theta^0,0 \right)
        = \frac{\partial C_\theta}{\partial s} \left( s_\theta^0 \right)
    \end{equation}
    and
    \begin{equation}
        \frac{\partial L_\theta}{\partial \theta} \left( s_\theta^0,0 \right)
        = \frac{\partial C_\theta}{\partial \theta} \left( s_\theta^0 \right).
    \end{equation}

    Now we show that $\overline{S}_t = \frac{\partial L}{\partial s} \left( s^0,t \right)$ satisfies the differential equation (Eq.~\ref{eq:Cauchy-3}).
    We omit to write the dependence in $\theta$ to keep notations simple.
    As a preliminary result, we show that for all initial state $\s$ and time $t$ we have
    \footnote{Eq.~\ref{eq:proof1-1} is the Kolmogorov backward equation for a deterministic process.}
    \begin{equation}
        \label{eq:proof1-1}
        \frac{\partial L}{\partial t}(\s,t) = \frac{\partial L}{\partial s}(\s,t) \cdot \mu(\s).
    \end{equation}
    To this end note that (by definition of $L$ and $S^0$) we have for all $t$ and $u$
    \begin{equation}
        \label{eq:proof1-2}
        L \left( S^0(\s,u),t \right) = L(\s,t+u).
    \end{equation}
    The derivatives of the right-hand side of Eq.~\ref{eq:proof1-2} with respect to $t$ and $u$ are clearly equal:
    \begin{equation}
        \frac{d}{dt} L(\s,t+u) = \frac{d}{du} L(\s,t+u).
    \end{equation}
    Therefore the derivatives of the left-hand side of Eq.~\ref{eq:proof1-2} are equal too:
    \begin{align}
        \frac{\partial L}{\partial t}\left( S^0(\s,u),t \right) & = \frac{d}{du} L \left( S^0(\s,u),t \right) \\
        & = \frac{\partial L}{\partial s}\left( S^0(\s,u),t \right) \cdot \mu\left( S^0(\s,u),t \right).
    \end{align}
    Here we have used the differential equation of motion (Eq.~\ref{eq:free-dynamics}).
    Evaluating this expression for $u=0$ we get Eq.~\ref{eq:proof1-1}.
    Then, differentiating Eq.~\ref{eq:proof1-1} with respect to $s$, we get
    \begin{equation}
        \frac{\partial^2 L}{\partial t \partial s}(\s,t)
        = \frac{\partial^2 L}{\partial s^2}(\s,t) \cdot \mu(\s)
        + \left( \frac{\partial \mu}{\partial s}(\s) \right)^T \cdot \frac{\partial L}{\partial s}(\s,t).
    \end{equation}
    Evaluating this expression at the fixed point $\s = s^0$
    and using the fixed point condition $\mu \left( s^0 \right) = 0$ we get
    \begin{equation}
        \frac{d}{dt} \frac{\partial L}{\partial s} \left( s^0,t \right)
        = \left( \frac{\partial \mu}{\partial s} \left( s^0 \right) \right)^T \cdot \frac{\partial L}{\partial s} \left( s^0,t \right).
    \end{equation}
    Therefore $\frac{\partial L}{\partial s} \left( s^0,t \right)$ satisfies Eq.~\ref{eq:Cauchy-3}.

    Finally we prove Eq.~\ref{eq:Cauchy-4}.
    Differentiating Eq.~\ref{eq:proof1-1} with respect to $\theta$, we get
    \begin{align}
        \frac{\partial^2 L_\theta}{\partial t \partial \theta} \left( \s,t \right)
        & = \frac{\partial^2 L_\theta}{\partial s \partial \theta} \left( \s,t \right) \cdot \mu_\theta(\s) \\
        & + \left( \frac{\partial \mu_\theta}{\partial \theta}(\s) \right)^T \cdot \frac{\partial L_\theta}{\partial s}(\s,t).
    \end{align}
    Evaluating this expression at the fixed point $\s = s_\theta^0$ we get
    \begin{equation}
        \frac{d}{dt} \frac{\partial L_\theta}{\partial \theta} \left( s_\theta^0,t \right)
        = \left( \frac{\partial \mu_\theta}{\partial \theta} \left( s_\theta^0 \right) \right)^T \cdot \frac{\partial L_\theta}{\partial s} \left( s_\theta^0,t \right).
    \end{equation}
    Hence the result.
\end{proof}

\iffalse
\begin{cor}
	The process $(\overline{S}_t,\overline{\Theta}_t)$ admits the following explicit form:
    \begin{align}
        \overline{S}_t = & \exp \left( t \; \frac{\partial \mu_\theta}{\partial s} \left( s_\theta^0 \right)^T \right) \cdot \frac{\partial C_\theta}{\partial s} \left( s_\theta^0 \right), \\
        \overline{\Theta}_t = & \frac{\partial C_\theta}{\partial \theta} \left( s_\theta^0 \right) +
        \frac{\partial \mu_\theta}{\partial \theta} \left( s_\theta^0 \right)^T \cdot
        \left( \frac{\partial \mu_\theta}{\partial s} \left( s_\theta^0 \right)^T \right)^{-1} \\
              & \cdot \left( \exp \left( t \; \frac{\partial \mu_\theta}{\partial s} \left( s_\theta^0 \right)^T \right) - I \right) \cdot
        \frac{\partial C_\theta}{\partial s} \left( s_\theta^0 \right).
        \label{eq:yt}
    \end{align}
\end{cor}

Here $I$ denotes the identity in the state space.
The Jacobian $\frac{\partial \mu_\theta}{\partial s} \left( s_\theta^0 \right)$ and its transpose are negative definite
since $s_\theta^0$ is a stable fixed point (Proposition \ref{prop:stability}).
Therefore as $t \to \infty$, Eq.~\ref{eq:yt} becomes Eq.~\ref{eq:thm-gradient}.
\fi

\begin{proof}[Proof of Theorem \ref{thm:temporal-derivatives}]
	First of all, note that
	\begin{align}
		\left. \frac{\partial^2 S_\theta^\beta}{\partial \beta \partial t} \right|_{\beta=0} \left( s_\theta^0,t \right)
		= & \lim_{\beta \to 0} \frac{1}{\beta} \left( \frac{\partial S_\theta^\beta}{\partial t} \left( s_\theta^0,t \right) - \frac{\partial S_\theta^0}{\partial t} \left( s_\theta^0,t \right) \right) \\
		= & \lim_{\beta \to 0} \frac{1}{\beta} \frac{\partial S_\theta^\beta}{\partial t} \left( s_\theta^0,t \right).
	\end{align}
	This is because $S_\theta^0 \left( s_\theta^0,t \right) = s_\theta^0$ for every $t \geq 0$,
	so that $\frac{\partial S_\theta^0}{\partial t} \left( s_\theta^0,t \right) = 0$ at every moment $t \geq 0$.
	Furthermore
	\begin{align}
        & \frac{\partial C_\theta}{\partial \theta} \left( s_\theta^0 \right) -
		\frac{\partial \mu_\theta}{\partial \theta} \left( s_\theta^0 \right)^T
        \cdot \left. \frac{\partial S_\theta^\beta}{\partial \beta} \right|_{\beta=0} \left( s_\theta^0,t \right) \\
		= & \frac{\partial C_\theta}{\partial \theta} \left( s_\theta^0 \right) -
        \lim_{\beta \to 0} \frac{1}{\beta} \left(
        \frac{\partial \mu_\theta}{\partial \theta} \left( s_\theta^0 \right)^T
        \cdot \left( S_\theta^\beta \left( s_\theta^0,t \right) - s_\theta^0 \right)
        \right)
	\end{align}
	Thus, we have to show that the process $(\widetilde{S}_t,\widetilde{\Theta}_t)$ defined as
	\begin{align}
		\widetilde{S}_t      & := - \left. \frac{\partial^2 S_\theta^\beta}{\partial \beta \partial t} \right|_{\beta=0} \left( s_\theta^0,t \right), \\
        \label{eq:proof3-1}
		\widetilde{\Theta}_t & := \frac{\partial C_\theta}{\partial \theta} \left( s_\theta^0 \right) +
        \frac{\partial \mu_\theta}{\partial \theta} \left( s_\theta^0 \right)^T
        \cdot \left. \frac{\partial S_\theta^\beta}{\partial \beta} \right|_{\beta=0} \left( s_\theta^0,t \right)
	\end{align}
	satisfies Eq.~\ref{eq:Cauchy-bis-1}, Eq.~\ref{eq:Cauchy-bis-2}, Eq.~\ref{eq:Cauchy-bis-3} and Eq.~\ref{eq:Cauchy-bis-4}.

	First we prove the result for $\widetilde{S}_t$.
	We omit to write the dependence in $\theta$ to keep notations simple.
	The process $\left( S^\beta(\s,t) \right)_{t \geq 0}$ is solution of the differential equation
	\begin{equation}
		\label{eq:proof3-2}
		\frac{\partial S^\beta}{\partial t}(\s,t) = \mu^\beta \left( S^\beta(\s,t) \right).
	\end{equation}
	with initial condition $S^\beta(\s,0) = \s$.
	Differentiating Eq.~\ref{eq:proof3-2} with respect to $\beta$,
	we get the following equation for the process $\frac{\partial S^\beta}{\partial \beta}(\s,t)$:
	\begin{equation}
		\frac{d}{dt} \frac{\partial S^\beta}{\partial \beta}(\s,t) =
		\frac{\partial \mu^\beta}{\partial \beta} \left( S^\beta(\s,t) \right)
		+ \frac{\partial \mu^\beta}{\partial s} \left( S^\beta(\s,t) \right) \cdot \frac{\partial S^\beta}{\partial \beta}(\s,t).
	\end{equation}
	Evaluating at $\beta=0$, taking $\s = s^0$ as an initial state and using the fact that $S^0 \left( s^0,t \right) = s^0$, we get
	\begin{equation}
		\label{eq:proof3-3}
		\frac{d}{dt} \left. \frac{\partial S^\beta}{\partial \beta} \right|_{\beta=0} \left( s^0,t \right) =
		- \frac{\partial C}{\partial s} \left( s^0 \right)
		+ \frac{\partial \mu}{\partial s} \left( s^0 \right) \cdot
		\left. \frac{\partial S^\beta}{\partial \beta} \right|_{\beta=0} \left( s^0,t \right).
	\end{equation}
	Since $S^\beta \left( \s,0 \right) = \s$ is independent of $\beta$, we have $\frac{\partial S^\beta}{\partial \beta} \left( \s,0 \right) = 0$.
	Therefore, evaluating Eq.~\ref{eq:proof3-3} at $t=0$, we get the initial condition (Eq.~\ref{eq:Cauchy-bis-1}):
	\begin{equation}
		\left. \frac{\partial^2 S^\beta}{\partial t \partial \beta} \right|_{\beta=0} \left( s^0,0 \right) =
		- \frac{\partial C}{\partial s} \left( s^0 \right).
	\end{equation}
	Moreover, differentiating Eq.~\ref{eq:proof3-3} with respect to time we get Eq.~\ref{eq:Cauchy-bis-3}:
	\begin{equation}
		\frac{d}{dt} \left. \frac{\partial^2 S^\beta}{\partial t \partial \beta} \right|_{\beta=0} \left( s^0,t \right) =
		\frac{\partial \mu}{\partial s} \left( s^0 \right) \cdot
		\left. \frac{\partial^2 S^\beta}{\partial t \partial \beta} \right|_{\beta=0} \left( s^0,t \right).
	\end{equation}

	Now we prove the result for $\widetilde{\Theta}_t$.
	Evaluating Eq.~\ref{eq:proof3-1} at time $t=0$ we get the initial condition (Eq.~\ref{eq:Cauchy-bis-2})
	\begin{equation}
		\widetilde{\Theta}_0 = \frac{\partial C_\theta}{\partial \theta} \left( s_\theta^0 \right).
	\end{equation}
	Moreover, differentiating Eq.~\ref{eq:proof3-1} with respect to time we get Eq.~\ref{eq:Cauchy-bis-4}:
	\begin{equation}
		\frac{d}{dt} \widetilde{\Theta}_t
		= - \frac{\partial \mu_\theta}{\partial \theta} \left( s_\theta^0 \right)^T
		\cdot \left. \frac{\partial^2 S_\theta^\beta}{\partial t \partial \beta} \right|_{\beta=0} \left( s_\theta^0,t \right).
	\end{equation}
	Hence the result.
\end{proof}

\iffalse
\begin{cor}
	The process $(\widetilde{S}_t,\widetilde{\Theta}_t)$ is equal to
    \begin{align}
        \widetilde{S}_t = & \exp \left( t \; \frac{\partial \mu_\theta}{\partial s} \left( s_\theta^0 \right) \right) \cdot \frac{\partial C_\theta}{\partial s} \left( s_\theta^0 \right), \\
        \widetilde{\Theta}_t = & \frac{\partial C_\theta}{\partial \theta} \left( s_\theta^0 \right) +
        \frac{\partial \mu_\theta}{\partial \theta} \left( s_\theta^0 \right)^T \cdot
        \left( \frac{\partial \mu_\theta}{\partial s} \left( s_\theta^0 \right) \right)^{-1} \\
              & \cdot \left( \exp \left( t \; \frac{\partial \mu_\theta}{\partial s} \left( s_\theta^0 \right) \right) - I \right) \cdot
        \frac{\partial C_\theta}{\partial s} \left( s_\theta^0 \right).
        \label{eq:temporal-derivative}
    \end{align}
\end{cor}
Similarly as in the previous subsection, as $t \to \infty$, Eq.~\ref{eq:temporal-derivative} becomes Eq~\ref{eq:thm-nu},
\fi

%%%%%%%%%%%%%%%%%%%%%%%%%%%%%%%%%%%%%%%%%%%%%%%%%%
%%%%%%%%%%%%%%%%%%%%%%%%%%%%%%%%%%%%%%%%%%%%%%%%%%
%%%%%%%%%%%%%%%%%%%%%%%%%%%%%%%%%%%%%%%%%%%%%%%%%%

\section{Experiments and Implementation of the Model}
\label{sec:implementation}

\begin{figure*}[h!]
\begin{center}
$\begin{array}{|c|cc|c|c|cccc|}
\hline
	\hbox{Architecture} & \hbox{Iterations}      & \hbox{Iterations}     & \epsilon & \beta & \alpha_1 & \alpha_2 & \alpha_3 & \alpha_4 \\
	                    & (\hbox{first phase})   & (\hbox{second phase}) &          &       &          &          &          &         \\
\hline
    784-512-512-10      & 200 & 100 & 0.001 & 1.0 & 0.4 & 0.1 & 0.01 & --\\
	784-512-512-512-10  & 200 & 100 & 0.001 & 1.0 & 1.0 & 0.1 & 0.04 & 0.002 \\
\hline
\end{array}$
  \captionof{table}{Hyperparameters for both the 2- and 3-layer networks trained on the MNIST dataset, as described in Appendix \ref{sec:implementation}. The objective function is optimized: the training error decreases to 0.00\%. The generalization error lies between 2\% and 3\% depending on the architecture. The learning rate $\epsilon$ is used
    for iterative inference (Eq. \ref{eq:udpate_dyn}).
    $\beta$ is the value of the influence parameter in the second phase.
    $\alpha_k$ is the learning rate for updating the parameters in layer $k$.}
\label{table:hyperparameters}
\end{center}
\end{figure*}

\begin{table*}[ht]
  \centering
  \begin{tabular}{|c|c|c|c|c|c|}
	\hline
    \textbf{Operation} & \textbf{Kernel} & \textbf{Strides} & \textbf{Feature Maps}& \textbf{Non Linearity} \\
    Convolution & 5 x 5 & 1 & 32 &   Relu  \\
    Convolution & 5 x 5 & 1 & 64 &  Relu \\
	\hline
  \end{tabular}
  \caption{Hyperparameters for MNIST CNN experiments.}
  \label{tab:Celeba_param}
\end{table*}

Our model is a recurrently connected neural network without any constraint on the feedback weight values (unlike models such as the Hopfield network).
We train multi-layered networks with 2 or 3 hidden layers on the MNIST task, with no skip-layer connections and no lateral connections within layers, as in Figure \ref{fig:network} (though the theory presented in this paper applies to any network architecture).

Rather than performing weight updates at all time steps in the second phase, we perform a single update at the end of the second phase:
\begin{equation}
	\label{eq-update-experiments}
	\Delta W_{ij} \propto \frac{\partial \mu_\theta}{\partial W_{ij}} \left( \x,s_\theta^0 \right)^T \cdot \frac{s_\theta^\beta - s_\theta^0}{\beta}.
\end{equation}

The predicted value (given the input $\x$) is read on the last layer at the first fixed point $s_0^0$ at the end of the first phase.
The predicted value $\widehat{y}$ is the index of the output unit whose activation is maximal among the $10$ output units:
\begin{equation}
	\widehat{y} := \underset{i}{\arg \max} \; s_{0,i}^0.
\end{equation}

{\bf Implementation of the neuronal dynamics.}
We start by clamping $\x$ to the data values.
Then we use the Euler method to implement Eq.~\ref{eq:augmented-dynamics}.
The naive method is to discretize time into short time lapses of duration $\epsilon$ and to update the state variable $s$ iteratively thanks to
\begin{equation}
	\label{eq:grad-descent}
	s \leftarrow s - \epsilon \; \mu_\theta^\beta(\x,\y,s).
\end{equation}
For our experiments, we choose the hard sigmoid activation function $\rho(s_i) = 0 \vee s_i \wedge 1$, where $\vee$ denotes the max and $\wedge$ the min.
To address stability issues, we restrict the range of values for the neurons' states and clip them between $0$ and $1$.
Thus, rather than the standard Euler method (Eq. \ref{eq:grad-descent}), we use a slightly different update rule for the state variable $s$:
\begin{equation}
	\label{eq:udpate_dyn}
	s \leftarrow 0 \vee \left( s - \epsilon \; \mu_\theta^\beta(\x,\y,s) \right) \wedge 1.
\end{equation}

We use different learning rates for different layers in our experiments.
%We do not have a clear explanation for why this improves performance, but we believe that this is due to the finite precision with which we approach the fixed points.
The hyperparameters chosen for each model are shown in Table \ref{table:hyperparameters}.
We initialize the weights according to the Glorot-Bengio initialization~\citep{GlorotAISTATS2010-small}.
For efficiency of the experiments, we use minibatches of 20 training examples.

We were also able to train on MNIST using a Convolutional Neural Network (CNN).
We got around 2\% generalization error.
The hyperparameters chosen to train this Convolutional Neural Network are shown in Table \ref{tab:Celeba_param}.

\begin{figure}[h!]
	\centering
	\captionsetup{width=.8\linewidth}
	\includegraphics[width=.6\linewidth]{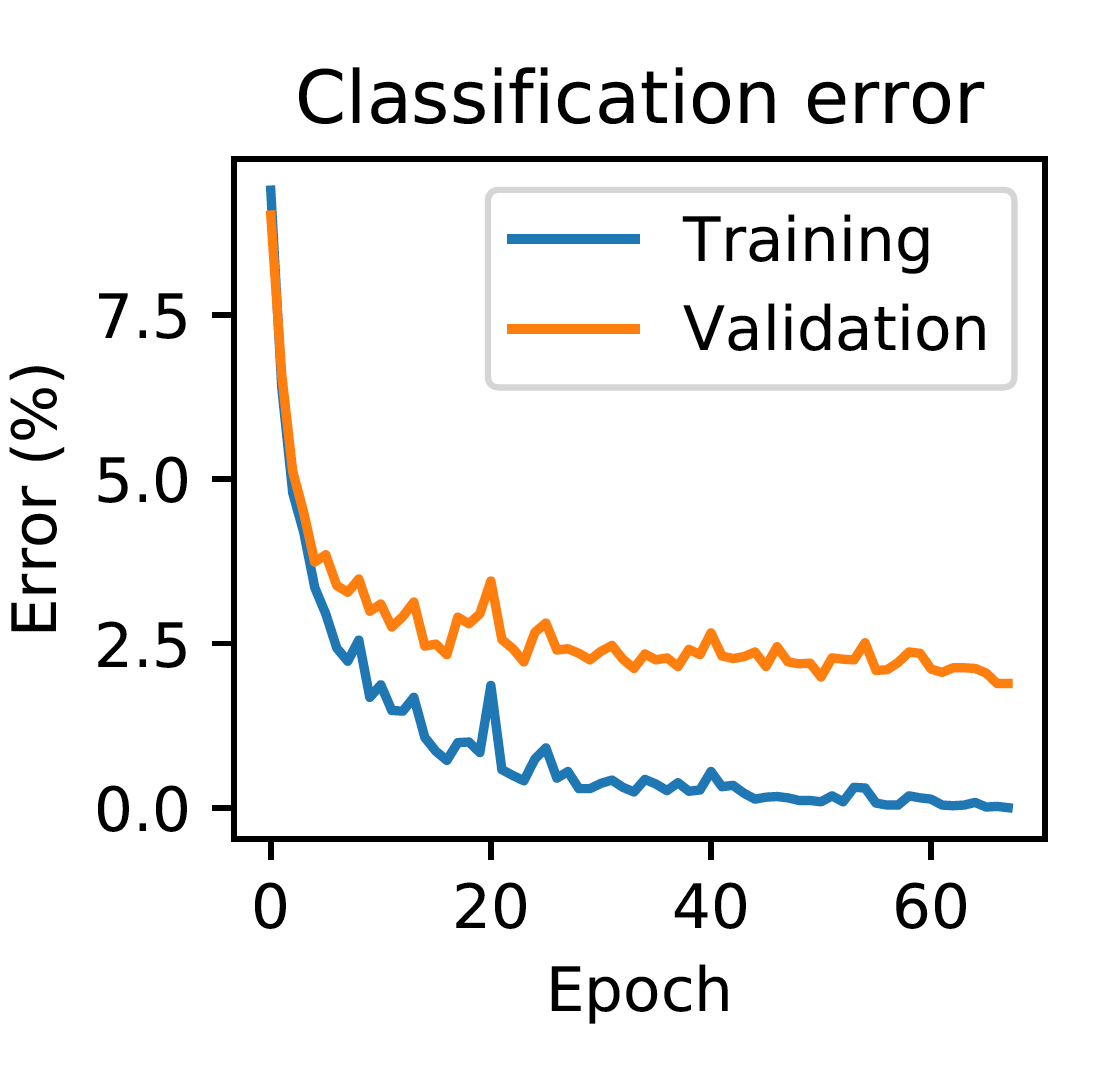}
	\caption{Experiments on the MNIST dataset.
	%The objective function is optimized: the training error decreases to 0.00\%. The generalization error lies between %2\% and 3\% depending on the architecture..
	}
	\label{fig:network}
\end{figure}

\end{document}